\newcommand\AND{
    \end{tabular}\hfil\linebreak[4]\hfil
    \begin{tabular}[t]{c}\ignorespaces
}
\newtheorem*{definition*}{Definition}
\newtheorem*{theorem*}{Theorem}
\newtheorem{theorem}{Theorem}[section]
\newtheorem{lemma}[theorem]{Lemma}
\newcommand{\R}{\mathbb{R}}
\newcommand{\eps}{\varepsilon}
\renewcommand{\epsilon}{\varepsilon}
\newcommand{\E}{\ensuremath{\mathbb{E}}}
\newcommand{\N}{\ensuremath{\mathcal{N}}}
\newcommand{\Loss}{\ensuremath{\mathcal{L}}}
\newcommand{\D}{\ensuremath{\mathcal{D}}}
\newcommand{\sign}{\ensuremath{\text{sign}}}
\newcommand{\wuni}{\ensuremath{w_{\text{unif}}}}
\newcommand{\ie}{i.e.\@\xspace}
\DeclarePairedDelimiter{\norm}{\lVert}{\rVert}
\DeclarePairedDelimiter{\brackets}{[}{]}
\title{Robustness May Be at Odds with Accuracy}
\author{
  Dimitris Tsipras\thanks{Equal Contribution.} \\
  MIT \\
  \texttt{tsipras@mit.edu} \\
  \and
  Shibani Santurkar\footnotemark[1]\\
  MIT \\
  \texttt{shibani@mit.edu} \\
  \and
  Logan Engstrom\footnotemark[1]\\
  MIT \\
  \texttt{engstrom@mit.edu} \\
  \AND
  Alexander Turner\\
  MIT \\
  \texttt{turneram@mit.edu} \\
  \and
  Aleksander M\k{a}dry\\
  MIT \\
  \texttt{madry@mit.edu}
}
\date{}
\begin{document}

\maketitle
\begin{abstract}
    We show that there may exist an inherent tension between the goal of adversarial robustness and that of standard generalization. 
Specifically, training robust models may not only be more resource-consuming, but also lead to a reduction of standard accuracy.
We demonstrate that this trade-off between the standard accuracy of a model and its robustness to adversarial perturbations \emph{provably} exists in a fairly simple and natural setting.
These findings also corroborate a similar phenomenon observed empirically in more complex settings.
Further, we argue that this phenomenon is a consequence of robust classifiers learning fundamentally different feature representations than standard classifiers.
These differences, in particular, seem to result in unexpected benefits: the representations learned by robust models tend to align better with salient data characteristics and human perception.

\end{abstract}
\section{Introduction}
\label{sec:intro}
Deep learning models have achieved impressive performance on a number of challenging benchmarks in computer vision, speech recognition and competitive game playing \citep{krizhevsky2012imagenet,graves2013speech,silver2016mastering,silver2017mastering,he2015delving}.
However, it turns out that these models are actually quite brittle.
In particular, one can often synthesize small, imperceptible perturbations of
the input data and cause the model to make highly-confident but erroneous
predictions~\citep{dalvi2004adversarial,biggio2018wild,szegedy2014intriguing}.

This problem of so-called \emph{adversarial examples} has garnered significant
attention recently and resulted in a number of approaches both to finding these
perturbations, and to training models that are robust to
them~\citep{goodfellow2015explaining,nguyen2015deep,dezfooli2016deepfool,carlini2017towards,sharif2016accessorize,kurakin2017adversarial,evtimov2018robust,athalye2018synthesizing}.
However, building such \emph{adversarially robust} models has proved to be quite challenging.
In particular, many of the proposed robust training methods were subsequently shown to be ineffective~\citep{carlini2017adversarial,athalye2018obfuscated,uesato2018adversarial}.
Only recently, has there been progress towards models that achieve robustness that can be demonstrated empirically and, in some cases, even formally verified~\citep{madry2018towards,wong2018provable,sinha2018certifiable,tjeng2019evaluating,raghunathan2018certified,dvijotham2018training,xiao2019training}.

The vulnerability of models trained using {standard} methods to adversarial perturbations makes it clear that the paradigm of adversarially robust learning is different from the classic learning setting. 
In particular, we already know that robustness comes at a cost.
This cost takes the form of computationally expensive training methods (more training time), but also, as shown recently in \citet{schmidt2018adversarially}, the potential need for more training data.
It is natural then to wonder: \emph{Are these the only costs of adversarial robustness?} And, if so, once we choose to pay these costs, 
\emph{would it always be preferable to have a robust model instead of a standard one?}
After all, one might expect that training models to be adversarially robust, albeit more resource-consuming, can only improve performance in the standard classification setting. 

\paragraph{Our contributions.}
In this work, we show, however, that the picture here is much more nuanced: the goals of standard performance and adversarial robustness might be fundamentally at odds.
Specifically, even though training models to be adversarially robust can be
beneficial in the regime of limited training data, in general, there can be an inherent trade-off between the \emph{standard accuracy} and \emph{adversarially robust accuracy} of a model.
In fact, we show that this trade-off \emph{provably} exists even in a fairly simple and natural setting.

At the root of this trade-off is the fact that features learned by the optimal
standard and optimal robust classifiers can be fundamentally different and, interestingly, this phenomenon persists even in the limit of infinite data. 
This goes against the natural expectation that given sufficient data, classic machine learning tools would be sufficient to learn robust models and emphasizes the need for techniques specifically tailored to the adversarially robust learning setting.

Our exploration also uncovers certain unexpected benefits of adversarially robust models. 
These stem from the fact that the set of perturbations considered in robust training contains perturbations that we would expect humans to be invariant to.
Training models to be robust to this set of perturbations thus leads to feature
representations that align better with human perception, and could also pave the
way towards building models that are easier to understand. For instance, the
features learnt by robust models yield clean inter-class interpolations, similar to those found by generative adversarial networks (GANs)~\citep{goodfellow2014generative} and other generative models.
This hints at the existence of a stronger connection between GANs and adversarial robustness.

\section{On the Price of Adversarial Robustness}
\label{sec:features}

Recall that in the canonical classification setting, the primary focus is on maximizing standard accuracy, \ie the performance on (yet) unseen samples from the underlying distribution. 
Specifically, the goal is to train models that have low \emph{expected loss} (also known as population risk):
\begin{align}
\underset{{(x, y) \sim \D} }{\E} \brackets*{  \Loss(x, y; \theta)}.
\label{eq:sr}
\end{align}

\paragraph{Adversarial robustness.}
The existence of adversarial examples largely changed this picture.
In particular, there has been a lot of interest in developing models that are resistant to them, or, in other words, models that are \emph{adversarially robust}.
In this context, the goal is to train models with low \emph{expected adversarial loss}:
\begin{align}
\underset{{(x, y) \sim \D} }{\E} \brackets*{ \max_{\delta \in \Delta} \Loss(x + \delta, y; \theta)}.
\label{eq:ar}
\end{align}
Here, $\Delta$ represents the set of perturbations that the adversary can apply to induce misclassification. 
In this work, we focus on the case when $\Delta$ is the set of $\ell_p$-bounded
perturbations, \ie $\Delta = \{\delta \in \R^d \; | \; \norm{\delta}_p \leq \epsilon    \}$.
This choice is the most common one in the context of adversarial examples and serves as a standard benchmark.
It is worth noting though that several other notions of adversarial perturbations have been studied.
These include rotations and
translations~\citep{fawzi2015manitest,engstrom2019rotation}, and smooth spatial deformations~\citep{xiao2018spatially}.
In general, determining the ``right'' $\Delta$ to use is a domain specific question. 

\paragraph{Adversarial training.}
The most successful approach to building adversarially robust models so
far~\citep{madry2018towards,wong2018provable,sinha2018certifiable,raghunathan2018certified} was so-called 
\emph{adversarial training}~\citep{goodfellow2015explaining}. 
Adversarial training is motivated by viewing (\ref{eq:ar}) as a statistical learning question, for which we need to solve the corresponding (adversarial) empirical risk minimization problem:
\begin{align*}
\min_\theta \underset{{(x, y) \sim \widehat{\D}} }{\E} \brackets*{ \max_{\delta \in S} \Loss(x + \delta, y; \theta)}.
\end{align*}
The resulting saddle point problem can be hard to solve in general.
However, it turns out to be often tractable in practice, at least in the context of $\ell_p$-bounded perturbations~\citep{madry2018towards}.
Specifically, adversarial training corresponds to a natural robust optimization approach to solving this problem~\citep{ben2009robust}.
In this approach, we repeatedly find the \emph{worst-case} input perturbations $\delta$ (solving the inner maximization problem),  and then update the model parameters to reduce the loss on these perturbed inputs.

Though adversarial training is effective, this success comes with certain drawbacks. 
The most obvious one is an increase in the training time (we need to compute new perturbations each parameter update step).
Another one is the potential need for more training data as shown recently in~\citet{schmidt2018adversarially}. 
These costs make training more demanding, but is that the whole price of being adversarially robust?
In particular, if we are willing to pay these costs: \emph{Are robust classifiers better than standard ones in every other aspect?} 
This is the key question that motivates our work.
\begin{figure}[htp]
	
	\begin{subfigure}[b]{0.3\textwidth}
		
		\includegraphics[height=1.65in]{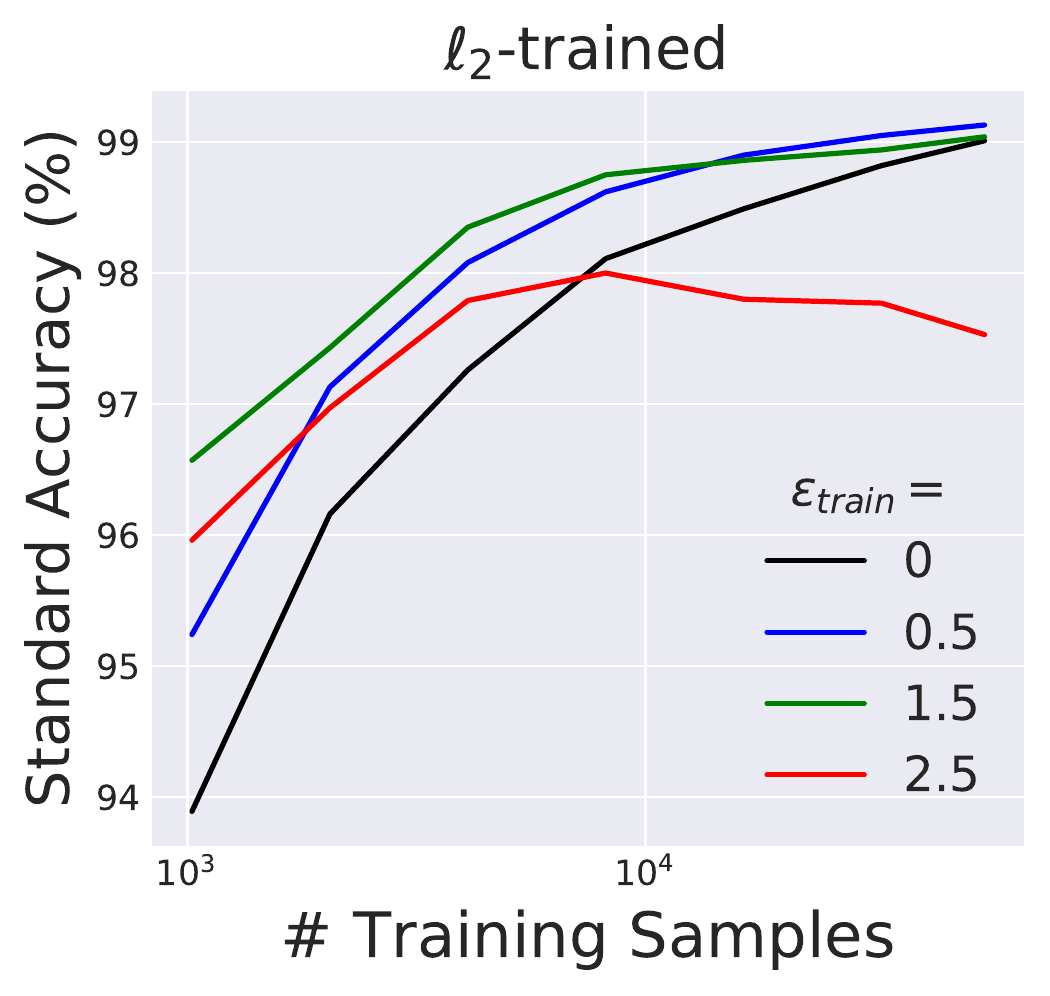} 
		\caption{MNIST}
	\end{subfigure}
	\hfil
	\begin{subfigure}[b]{0.3\textwidth}
		\includegraphics[height=1.65in]{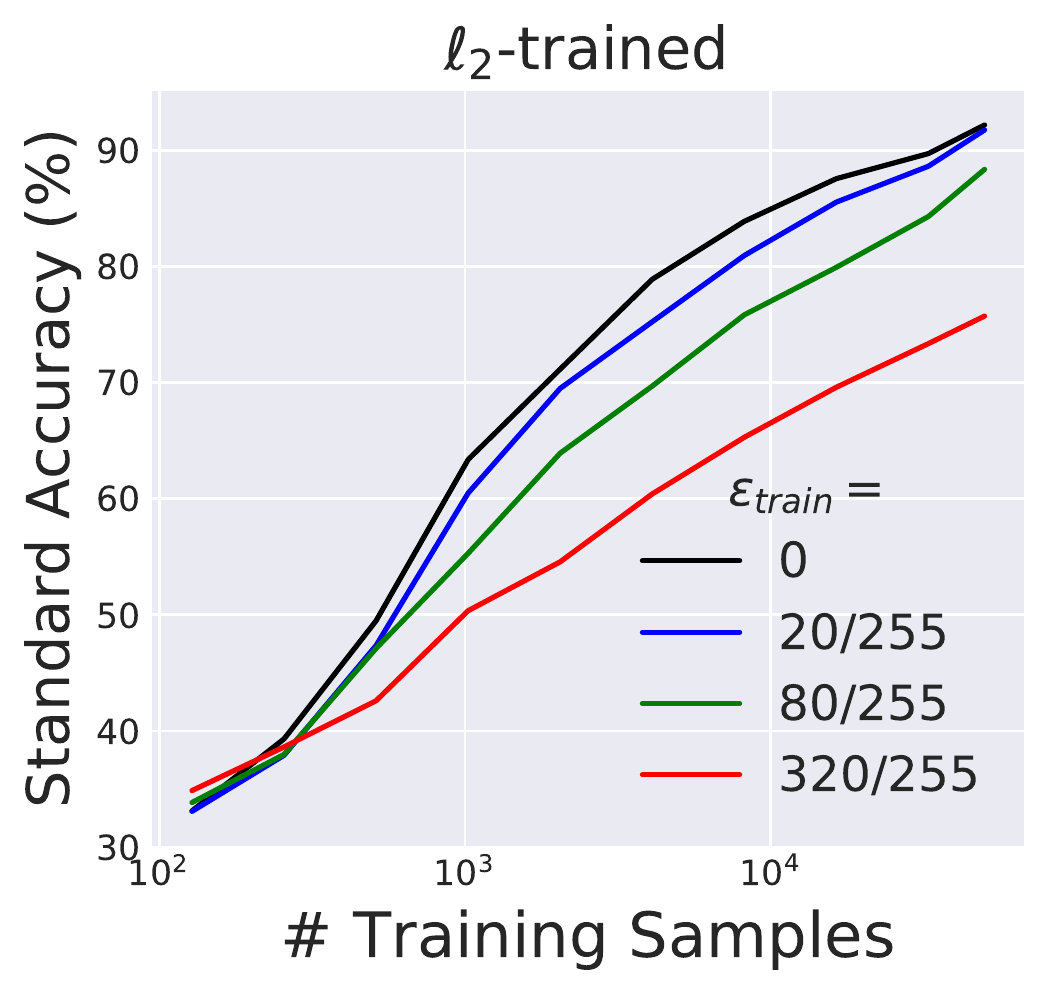} 
		\caption{CIFAR-10}
	\end{subfigure}
	\hfil
	\begin{subfigure}[b]{0.3\textwidth}
		\includegraphics[height=1.65in]{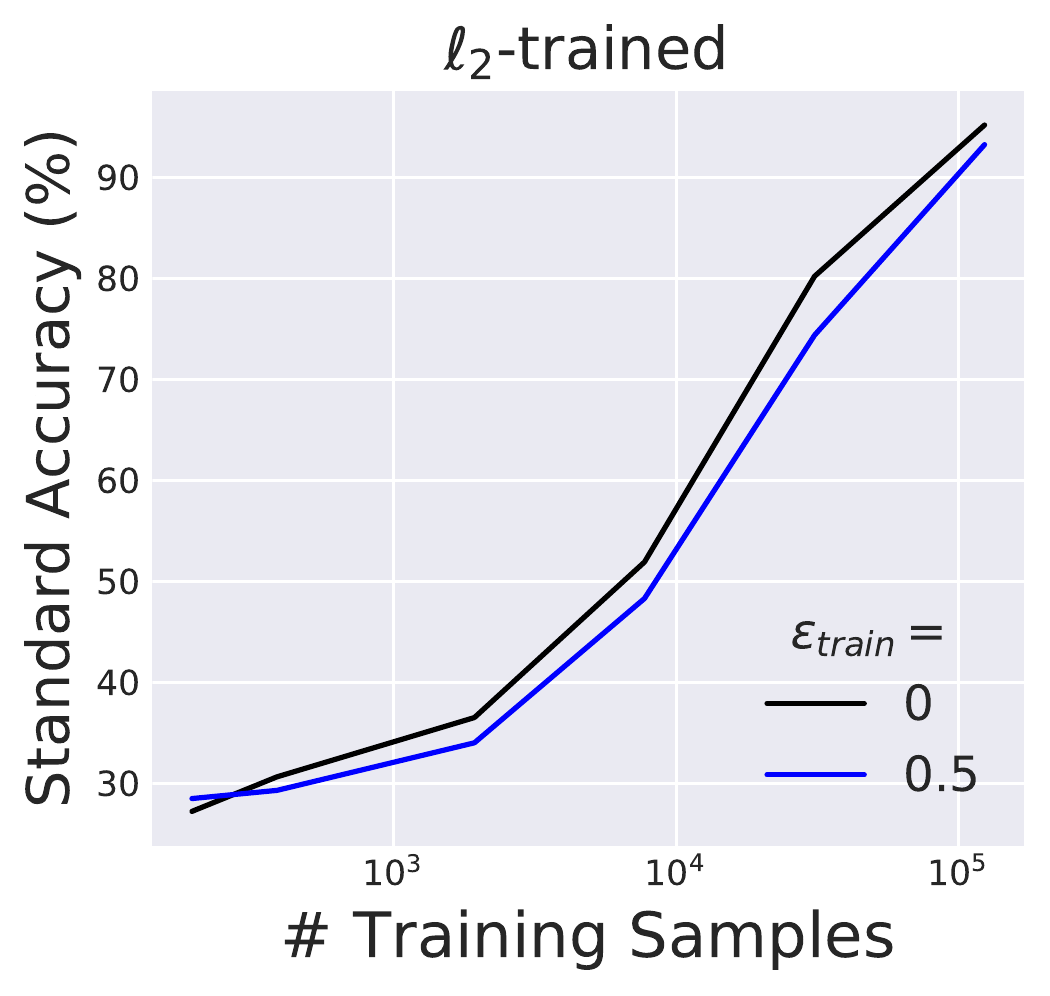} 
		\caption{Restricted ImageNet}
	\end{subfigure}
	\caption{Comparison of the standard accuracy of models trained against an $\ell_2$-bounded adversary as a function of size of the training dataset.
		We observe that when training with few samples, adversarial training has a positive effect on model generalization (especially on MNIST).
		However, as training data increase, the standard accuracy of robust models drops below that of the standard model ($\epsilon_{train} = 0$).
		 Similar results for $\ell_\infty$ trained networks are shown in Figure~\ref{fig:md_linf} of Appendix~\ref{app:omitted_figures}.}
	\label{fig:md_l2}
\end{figure}
\paragraph{Adversarial Training as a Form of Data Augmentation.}
Our point of start is a popular view of adversarial training as the ``ultimate'' form of data augmentation.
According to this view, the adversarial perturbation set $\Delta$ is seen as the set of invariants that a good model should satisfy (regardless of the adversarial robustness considerations).
Thus, finding the worst-case $\delta$ corresponds to augmenting the training data in the ``most confusing'' and thus also ``most helpful'' manner.
A key implication of this view is that adversarial training should be beneficial
for the standard accuracy of a model~\citep{torkamani2013convex,
torkamani2014robustness, goodfellow2015explaining, miyato2018virtual}.

Indeed, in Figure~\ref{fig:md_l2}, we see this effect, when classifiers are trained with relatively few samples (particularly on MNIST).
In this setting, the amount of training data available is potentially
insufficient to learn a good standard classifier and the set of adversarial
perturbations used  ``compatible'' with the learning task. (That is, good \emph{standard} models for this task need to be also somewhat invariant to these perturbations.)
In such regime, robust training does indeed act as data augmentation, regularizing the model and leading to a better solution (from standard accuracy point of view).
(Note that this effect seems less pronounced for CIFAR-10 and restricted ImageNet, possibly because $\ell_p$-invariance is not as important for these tasks.)

Surprisingly however, as we include more samples in the training set, this positive effect becomes less significant (Figure~\ref{fig:md_l2}).
In fact, after some point adversarial training actually \emph{decreases} the standard accuracy.
Overall, when training on the entire dataset, we observe a decline in standard accuracy as the strength of the adversary increases (see Figure~\ref{fig:na_low} of Appendix~\ref{app:omitted_figures} for a plot of standard accuracy vs.\ $\eps$). (Note that this still holds if we train on batches that contain natural examples as well, as recommended by \citet{kurakin2017adversarial}.  See Appendix~\ref{app:half} for details.)
Similar effects were also observed in prior and concurrent
work~\citep{kurakin2017adversarial,madry2018towards,dvijotham2018dual,wong2018scaling,xiao2019training, su2018robustness}. 

The goal of this work is to illustrate and explain the roots of this phenomenon.
In particular, we would like to understand:
\begin{center}
\emph{
Why does there seem to be a trade-off between standard and adversarially robust accuracy?
}
\end{center}
As we will show, this effect is not necessarily an artifact of our adversarial
training methods but may in fact be an inevitable consequence of the different goals of adversarial robustness and standard generalization.

\subsection{Adversarial robustness might be incompatible with standard accuracy}
\label{sec:theory}
As we discussed above, we often observe that employing adversarial training leads to a decrease in a model's standard accuracy.
In what follows, we show that this phenomenon is possibly a manifestation of an inherent tension between standard accuracy and adversarially robust accuracy.
In particular, we present a fairly simple theoretical model where this tension
provably exists.

\paragraph{Our binary classification task.}
Our data model consists of input-label pairs $(x, y)$ sampled from a distribution $\D$ as follows:
\begin{equation}\label{eq:model}
y \stackrel{u.a.r}{\sim} \{-1, +1\},\qquad
x_1 = \begin{cases}
        +y, &\text{w.p. }p\\
        -y, &\text{w.p. }1-p
      \end{cases},\qquad
x_2,\ldots,x_{d+1} \stackrel{i.i.d}{\sim} \N(\eta y,1),
\end{equation}
where $\N(\mu, \sigma^2)$ is a normal distribution with mean $\mu$ and variance $\sigma^2$, and $p \geq 0.5$.
We chose $\eta$ to be large enough so that a simple classifier attains high standard accuracy (>$99\%$) -- e.g.\  $\eta = \Theta(1/\sqrt d)$ will suffice.
The parameter $p$ quantifies how correlated the feature $x_1$ is with the label.
For the sake of example, we can think of $p$ as being $0.95$.
This choice is fairly arbitrary; the trade-off between standard and robust accuracy will be qualitatively similar for any $p < 1$.

\paragraph{Standard classification is easy.}
Note that samples from $\D$ consist of a single feature that is \emph{moderately correlated} with the label and $d$ other features that are only \emph{very weakly} correlated with it.
Despite the fact that each one of the latter type of features individually is hardly predictive of the correct label, this distribution turns out to be fairly simple to classify from a standard accuracy perspective.
Specifically, a natural (linear) classifier
\begin{equation}\label{eq:linear_classifier}
    f_{\text{avg}}(x) := \sign(\wuni^\top x),\quad
        \text{ where } \wuni :=
            \left[0,\frac{1}{d},\ldots,\frac{1}{d}\right],
\end{equation}
achieves standard accuracy arbitrarily close to $100\%$, for $d$ large enough. Indeed, observe that 
\[\Pr[f_{\text{avg}}(x)=y] = \Pr[\sign(\wuni x)=y] 
    = \Pr\left[\frac{y}{d}\sum_{i=1}^d\N(\eta y, 1)>0\right]
    = \Pr\left[\N\left(\eta, \frac{1}{d}\right)>0\right],\]
which is $>99\%$ when $\eta\geq 3/\sqrt d$.

\paragraph{Adversarially robust classification.}
Note that in our discussion so far, we effectively viewed the average of $x_2,\ldots,x_{d+1}$ as a single ``meta-feature'' that is highly correlated with the correct label.
For a standard classifier, any feature that is even slightly correlated with the label is useful.
As a result, a standard classifier will take advantage (and thus rely on) the weakly correlated features $x_2,\ldots,x_{d+1}$ (by implicitly pooling information) to achieve almost perfect standard accuracy.

However, this analogy breaks completely in the adversarial setting. In particular, an $\ell_\infty$-bounded adversary that is only allowed to perturb each feature by a moderate $\eps$ can effectively override the effect of the aforementioned meta-feature.
For instance, if $\eps=2\eta$, an adversary can shift each weakly-correlated feature towards $-y$.
The classifier would now see a perturbed input $x'$ such that each of the features $x'_2,\ldots,x'_{d+1}$ are sampled i.i.d.\ from $\N(-\eta y, 1)$ (i.e., now becoming {\em anti}-correlated with the correct label).
Thus, when $\eps\geq 2\eta$, the adversary can essentially simulate the distribution of the weakly-correlated features as if belonging to the wrong class.

Formally, the probability of the meta-feature correctly predicting $y$ in this setting~\eqref{eq:linear_classifier} is 
\[\min_{\|\delta\|_\infty\leq \eps}\Pr[\sign(x+\delta)=y] 
\leq \Pr\left[\N\left(\eta, \frac{1}{d}\right)-\eps >0\right]
= \Pr\left[\N\left(-\eta, \frac{1}{d}\right)>0\right].\]  
As a result, the simple classifier in~\eqref{eq:linear_classifier} that relies solely on these features cannot get adversarial accuracy better than $1\%$.

Intriguingly, this discussion draws a distinction between \emph{robust} features ($x_1$) and \emph{non-robust} features ($x_2,\ldots,x_{d+1}$) that arises in the adversarial setting.
While the meta-feature is far more predictive of the true label, it is extremely unreliable in the presence of an adversary.
Hence, a tension between standard and adversarial accuracy arises.
Any classifier that aims for high accuracy (say $>99\%$) will have to heavily rely on non-robust features (the robust feature provides only, say, $95\%$ accuracy).
However, since the non-robust features can be arbitrarily manipulated, this classifier will inevitably have low adversarial accuracy.
We make this formal in the following theorem proved in Appendix~\ref{app:pf-lunch}.

\begin{theorem}[Robustness-accuracy trade-off]\label{thm:lunch}
Any classifier that attains at least $1-\delta$ standard accuracy on $\D$ has robust accuracy at most $\frac{p}{1-p}\delta$ against an $\ell_\infty$-bounded adversary with $\eps \geq 2\eta$. 
\end{theorem}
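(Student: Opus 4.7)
My plan is to exhibit one specific adversarial strategy and show, via a short algebraic manipulation, that no classifier with high standard accuracy can resist it with probability exceeding $\frac{p}{1-p}\delta$.

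First I would fix the adversary's perturbation to $\delta(y) = -2\eta y \cdot (0, 1, \ldots, 1)$, which is admissible since $\|\delta(y)\|_\infty = 2\eta \leq \eps$. The crucial observation is distributional: conditioned on $y$ and $x_1$, this perturbation leaves $x_1$ unchanged but transforms the non-robust features from $\N(\eta y, 1)^d$ into $\N(-\eta y, 1)^d$, which coincides with their conditional distribution under label $-y$. In effect, the attack swaps the ``$y=+1$'' and ``$y=-1$'' conditional distributions of $(x_2,\ldots,x_{d+1})$.

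Next, for any classifier $f$, I would parametrize its behavior by the four conditional probabilities $q_{y,s} := \Pr[f(x) = y \mid y, x_1 = s]$ for $(y, s) \in \{\pm 1\}^2$, and set $X := q_{+1,+1} + q_{-1,-1}$ and $Y := q_{+1,-1} + q_{-1,+1}$. A short calculation, using the distributional swap to identify $\Pr[f(x + \delta(y)) = y \mid y, x_1 = s]$ with $1 - q_{-y, s}$, yields
\begin{align*}
\mathrm{Acc}_s(f) \;=\; \tfrac{1}{2}\bigl(pX + (1-p)Y\bigr), \qquad
\mathrm{Acc}_r(f) \;\leq\; 1 - \tfrac{1}{2}\bigl((1-p)X + pY\bigr),
\end{align*}
where the inequality for $\mathrm{Acc}_r$ reflects that our fixed adversary is only one valid choice, so any stronger adversary can only lower the classifier's robust accuracy further.

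The remaining step is a clean linear combination: taking $p\cdot \mathrm{Acc}_s(f) + (1-p)\cdot \mathrm{Acc}_r(f)$, the $Y$-terms cancel and one is left with $(1-p) + \tfrac{1}{2}(2p-1)X$, which is at most $p$ since $X \leq 2$. Rearranging gives $(1-p)\,\mathrm{Acc}_r(f) \leq p\bigl(1 - \mathrm{Acc}_s(f)\bigr) \leq p\delta$, as claimed. I expect the main obstacle to be the second step: correctly pairing each $\Pr[f(x+\delta(y)) = y \mid y, x_1 = s]$ with the ``opposite-label'' quantity $1 - q_{-y, s}$ via the distributional swap, and being careful with the $x_1$-conditioning (since $\Pr[x_1 = y] = p$ under the true label but $\Pr[x_1 = y] = 1-p$ under the flipped label). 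Once the two identities above are in hand, the final weighted combination is essentially forced, and it explains why the factor $p/(1-p)$ is the correct one: it is precisely the weighting that eliminates the $Y$-contribution and leaves an $X$-term that is maximized at the trivial bound $X \leq 2$.
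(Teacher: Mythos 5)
Your proposal is correct and follows essentially the same route as the paper's proof in Appendix~\ref{app:pf-lunch}: the same fixed shift-by-$2\eta y$ adversary that swaps the conditional distributions of $x_2,\ldots,x_{d+1}$, the same parametrization by four conditional probabilities (your $X$ and $Y$ are exactly $2-a$ and $2-b$ in the paper's notation), and the same final bound, since your weighted combination $p\,\mathrm{Acc}_s + (1-p)\,\mathrm{Acc}_r \leq p$ using $X\leq 2$ (and $p\geq 1/2$) is an algebraic rearrangement of the paper's step bounding $\frac{1}{2}((1-p)a+pb)$ by $\frac{p}{2(1-p)}(pa+(1-p)b)$ using $a\geq 0$.
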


This bound implies that if $p < 1$, as standard accuracy approaches $100\%$ ($\delta\to 0$), adversarial accuracy falls to $0\%$.
As a concrete example, consider $p=0.95$, for which any classifier with standard accuracy more than $1-\delta$ will have robust accuracy at most $19\delta$\footnote{Hence, any classifier with standard accuracy $\geq99\%$ has robust accuracy $\leq 19\%$ and any classifier with standard accuracy $\geq 96\%$ has robust accuracy $\leq 76\%$.}.
Also it is worth noting that the theorem is tight.
If $\delta=1-p$, both the standard and adversarial accuracies are bounded by $p$ which is attained by the classifier that relies solely on the first feature.
Additionally, note that compared to the scale of the features $\pm1$, the value of $\eps$ required to manipulate the standard classifier is very small ($\eps= O(\eta)$, where $\eta=O(1/\sqrt d)$).

\paragraph{On the (non-)existence of an accurate and robust classifier.}
It might be natural to expect that in the regime of infinite data, the
Bayes-optimal classifier---the classifier minimizing classification error with
full-information about the distribution---is a robust classifier.
Note however, that this is not true for the setting we analyze above.
Here, the trade-off between standard and adversarial accuracy is an inherent trait of the data distribution itself and not due to having insufficient samples.
In this particular classification task, we (implicitly) assumed that there does not exist a classifier that is \emph{both} robust and very accurate (i.e.\ $> 99$\% standard and robust accuracy).
Thus, for this task, any classifier that is very accurate (including the
Bayes-optimal classifier) will necessarily be non-robust.

This seemingly goes against the common assumption in adversarial ML that such
perfectly robust and accurate classifiers for standard datasets exist, e.g.,
humans.
Note, however, that humans can have lower accuracy in certain benchmarks
compared to ML models~\citep{karpathy2011lessons,karpathy2014what,he2015delving,
gastaldi2017shakeshake} potentially because ML models rely on brittle features
that humans themselves are naturally invariant to.
Moreover, even if perfectly accurate and robust classifiers exist for a
particular benchmark, state-of-the-art ML models might still rely on such
non-robust features of the data to achieve their performance.
Hence, training these models robustly will result in them being unable to rely
on such features, thus suffering from reduced accuracy.

\subsection{The importance of adversarial training}
As we have seen in the distributional model $\D$~\eqref{eq:model}, a classifier that achieves very high standard accuracy~\eqref{eq:sr} will inevitably have near-zero adversarial accuracy.
This is true even when a classifier with reasonable standard and robust accuracy exists.
Hence, in an adversarial setting~\eqref{eq:ar}, where the goal is to achieve high adversarial accuracy, the training procedure needs to be modified.
We now make this phenomenon concrete for linear classifiers trained using the soft-margin SVM loss.
Specifically, in Appendix~\ref{app:pf-svm} we prove the following theorem.

\begin{theorem} [Adversarial training matters]
\label{thm:svm}
For $\eta\geq 4/\sqrt d$ and $p\leq 0.975$ (the first feature is not perfect), a soft-margin SVM classifier of unit weight norm minimizing the distributional loss achieves a standard accuracy of $>99\%$ and adversarial accuracy of $<1\%$ against an $\ell_\infty$-bounded adversary of $\eps\geq 2\eta$.
Minimizing the distributional adversarial loss instead leads to a robust classifier that has standard and adversarial accuracy of $p$ against any $\eps <1$.
\end{theorem}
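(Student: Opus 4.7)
The plan is to exploit the exchange symmetry of the coordinates $x_2,\ldots,x_{d+1}$ to reduce both SVMs to a two-parameter convex problem on the unit circle and then analyze each case via an explicit witness (together with Theorem~\ref{thm:lunch} for the adversarial-accuracy bound in the first part). Because the Gaussian coordinates are i.i.d.\ conditional on $y$ and the hinge loss is convex in $w$, any minimizer must satisfy $w_2=\cdots=w_{d+1}$, so I parametrize $w=(a,b/\sqrt{d},\ldots,b/\sqrt{d})$ with $a^2+b^2=1$. Setting $s=yx_1\in\{\pm 1\}$ with $\Pr[s=1]=p$ and $\tilde Z=\tfrac{1}{\sqrt d}\sum_{i\ge 2} yx_i\sim \normal(\sqrt d\,\eta,1)$, the standard objective becomes $L(a,b)=p\,\E[(1-a-b\tilde Z)^+]+(1-p)\E[(1+a-b\tilde Z)^+]$, while the adversarial objective $L_{\mathrm{adv}}(w)=\E[(1-yw^\top x+\eps\|w\|_1)^+]$ picks up the extra penalty $\eps(a+b\sqrt d)$.

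For the first claim I use the witness $w_0=(0,1/\sqrt d,\ldots,1/\sqrt d)$, i.e.\ $(a,b)=(0,1)$. Its loss $\E[(1-\tilde Z)^+]$ evaluates via $\E[(c-Y)^+]=c\Phi(c)+\phi(c)$ for $Y\sim\normal(0,1)$ to $(1-\mu)\Phi(1-\mu)+\phi(1-\mu)$ with $\mu=\sqrt d\,\eta\ge 4$, which is extremely small. Since the SVM optimum has loss at most $L(w_0)$ and $\I[yw^\top x<0]\le (1-yw^\top x)^+$, its standard error is also at most $L(w_0)$, yielding standard accuracy above $99\%$. Feeding $\delta=L(w_0)$ into Theorem~\ref{thm:lunch} then bounds the adversarial accuracy under any $\eps\ge 2\eta$ by $\tfrac{p}{1-p}L(w_0)<1\%$ (using the Gaussian tail for $\mu\ge 4$).

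For the second claim, the key is that under $L_{\mathrm{adv}}$ the adversary's penalty charges $\eps\sqrt d\,b$ for using the $b$-features versus only $\eps\,a$ for using $w_1$, and the adversarially shifted meta-feature $\tilde Z-\eps\sqrt d$ has mean $\sqrt d(\eta-\eps)\le -\sqrt d\,\eta\le -4$. Evaluating at the two canonical corners gives $L_{\mathrm{adv}}(1,0)=p\eps+(1-p)(2+\eps)=\eps+2(1-p)$, and by Jensen applied to the convex $(\cdot)^+$, $L_{\mathrm{adv}}(0,1)\ge 1+\sqrt d(\eps-\eta)\ge 5$ whenever $\eps\ge 2\eta$. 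Combined with convexity of $L_{\mathrm{adv}}$ in $(a,b)$ and a check that the tangential derivative on the unit sphere at $(1,0)$ in the $b$-direction is strictly positive, the constrained minimizer is $w^\star=e_1$. Because $|x_1|=1$, any $\ell_\infty$ perturbation of size $\eps<1$ cannot flip $\sign(x_1)$, so both the standard and the adversarial accuracy of $w^\star$ equal $\Pr[x_1=y]=p$.

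The main obstacle I expect is promoting the two-point comparisons to global optimality statements on the unit sphere. For the standard SVM, the stationarity condition in fact forces the optimum to a point $(a^\star,b^\star)$ with $a^\star$ slightly positive rather than exactly $0$, and one must verify that the resulting standard error is still small enough that Theorem~\ref{thm:lunch} delivers $<1\%$; this is tight in the quoted constants $\eta\ge 4/\sqrt d$ and $p\le 0.975$ and so demands a careful Gaussian-tail calculation. For the adversarial SVM, ruling out an interior minimizer on the unit sphere requires a clean convexity-plus-boundary-gradient argument (or an explicit KKT analysis of the two-variable program), which is the step I view as the most delicate.
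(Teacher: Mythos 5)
Your outline parallels the paper's reduction (symmetry to a two-parameter problem, the meta-feature $z\sim\N(y\eta\sqrt d,1)$, witness comparisons), but two steps do not go through as stated. The first is quantitative: for the standard SVM you bound its $0$--$1$ error by its hinge loss and then by the hinge loss of the witness $(a,b)=(0,1)$, i.e.\ by $L(w_0)=(1-\mu)\Phi(1-\mu)+\phi(1-\mu)$ with $\mu=\eta\sqrt d\ge 4$, which at the corner $\mu=4$ equals about $3.8\times10^{-4}$. Feeding this $\delta$ into Theorem~\ref{thm:lunch} with the allowed $p=0.975$ gives robust accuracy at most $\frac{p}{1-p}\delta\le 39\cdot 3.8\times 10^{-4}\approx 1.5\%$, which is \emph{not} below $1\%$; no sharper Gaussian-tail evaluation of $L(w_0)$ can fix this, because $L(w_0)$ is exactly that value at $\mu=4$. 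What is needed is an upper bound on the true error of the \emph{optimizer}, which requires structural information about it. This is precisely what the paper supplies: Lemma~\ref{lem:lb} shows (by comparing against the same witness, which is where $p\le0.975$ enters via $(1-p)\cdot 0.016$ vs.\ $0.0004$) that the optimum puts weight $v^\star\ge 1/\sqrt2$ on the averaged feature, and then Lemma~\ref{lem:svm_adv} bounds the adversarial accuracy \emph{directly}: after the shift the meta-feature is $\N(-\mu y,1)$ and with probability at least $\Phi(\mu-1)\ge\Phi(3)$ it overwhelms $w_1\le 1/\sqrt 2$, giving robust accuracy around $0.14\%$ without invoking Theorem~\ref{thm:lunch} at all. (Your hinge-loss shortcut for the standard-accuracy half is fine, and in fact slightly sharper than the paper's bound.)

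The second gap is the global-optimality step for the adversarially trained classifier. On the unit circle, convexity of $L_{\mathrm{adv}}$ in $(a,b)$, the corner values, and a positive tangential derivative at $(1,0)$ do not rule out an interior minimizer of the arc: the restriction of a convex function to a circular arc need not be unimodal (a convex sublevel set can meet the arc in two disjoint pieces), so the objective can rise near $(1,0)$, dip below $L_{\mathrm{adv}}(1,0)$ in the interior, and only then blow up towards $(0,1)$. The paper sidesteps the constrained formulation entirely: it keeps the penalized objective (with $\lambda$ tuned so the optimum has unit norm) and argues coordinatewise in Lemma~\ref{lem:svm_robust} --- for $\eps\ge 2\eta$ the contribution of any $w_i$, $i\ge 2$, to the adversarial hinge argument, namely $\eps|w_i|-yw_ix_i$, has nonnegative mean, so zeroing $w_i$ cannot increase the expected hinge term (condition on the remaining coordinates and use Jensen plus monotonicity of the hinge) while it strictly decreases the $\ell_2$ penalty, contradicting optimality. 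The same penalized-form device is also what makes the symmetry reduction airtight (averaging a minimizer with its coordinate-swapped copy strictly lowers the norm, as in Lemma~\ref{lem:symmetry}), whereas on the sphere the renormalization step needs an additional argument. If you recast both steps in the penalized formulation your plan for the second claim goes through; for the first claim you still need a lemma of the type of Lemma~\ref{lem:lb}, or some other handle on the optimizer itself.
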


This theorem shows that if our focus is on robust models, adversarial training is \emph{necessary} to achieve non-trivial adversarial accuracy in this setting.
Soft-margin SVM classifiers and the constant $0.975$ are chosen for mathematical convenience.
Our proofs do not depend on them in a crucial way and can be adapted, in a straightforward manner, to other natural settings, e.g.\ logistic regression.

\paragraph{Transferability.} 
An interesting implication of our analysis is that standard training produces classifiers that rely on features that are weakly correlated with the correct label.
This will be true for any classifier trained on the same distribution.
Hence, the adversarial examples that are created by perturbing each feature in the direction of $-y$ will transfer across classifiers trained on independent samples from the distribution.
This constitutes an interesting manifestation of the generally observed
phenomenon of transferability~\citep{szegedy2014intriguing} and might hint at its origin. 

\paragraph{Empirical examination.} 
\label{sec:empirical_linear}
In Section~\ref{sec:theory}, we showed that the trade-off between standard accuracy and robustness might be inevitable.
To examine how representative our theoretical model is of real-world datasets,
we experimentally investigate this issue on MNIST~\citep{lecun1998mnist} as it is amenable to linear classifiers.
Interestingly, we observe a qualitatively similar behavior.
For instance, in Figure~\ref{fig:linear_mnist}(b) in Appendix~\ref{sec:empirical_linear_app}, we see that the standard classifier assigns weight to even weakly-correlated features.
(Note that in settings with finite training data, such brittle features could arise even from noise -- see Appendix~\ref{sec:empirical_linear_app}.)
The robust classifier on the other hand does not assign any weight beyond a certain threshold.
Further, we find that it is possible to obtain a robust classifier by directly training a \emph{standard} model using only features that are relatively well-correlated with the label (without adversarial training).
As expected, as more features are incorporated into the training, the standard accuracy is improved at the cost of robustness (see Appendix~\ref{sec:empirical_linear_app} Figure~\ref{fig:linear_mnist}(c)).

\section{Unexpected benefits of adversarial robustness}
\label{sec:impact}
In Section~\ref{sec:features}, we established that robust and standard models might depend on very different sets of features.
We demonstrated how this can lead to a decrease in standard accuracy for robust models.
In this section, we will argue that the representations learned by robust models can also be beneficial.

At a high level, robustness to adversarial perturbations can be viewed as an invariance property that a model satisfies.
A model that achieves small loss for all perturbations in the set $\Delta$, will necessarily have learned representations that are invariant to such perturbations.
Thus, robust training can be viewed as a method to embed certain invariances in a model.
Since we also expect humans to be invariant to these perturbations (by design, e.g.\ small $\ell_p$-bounded changes of the pixels), robust models will be more aligned with human vision than standard models.
In the rest of the section, we present evidence supporting the view.

\begin{figure}[!t]
	\begin{subfigure}[b]{0.3\textwidth}
	
	\includegraphics[height=2.2in]{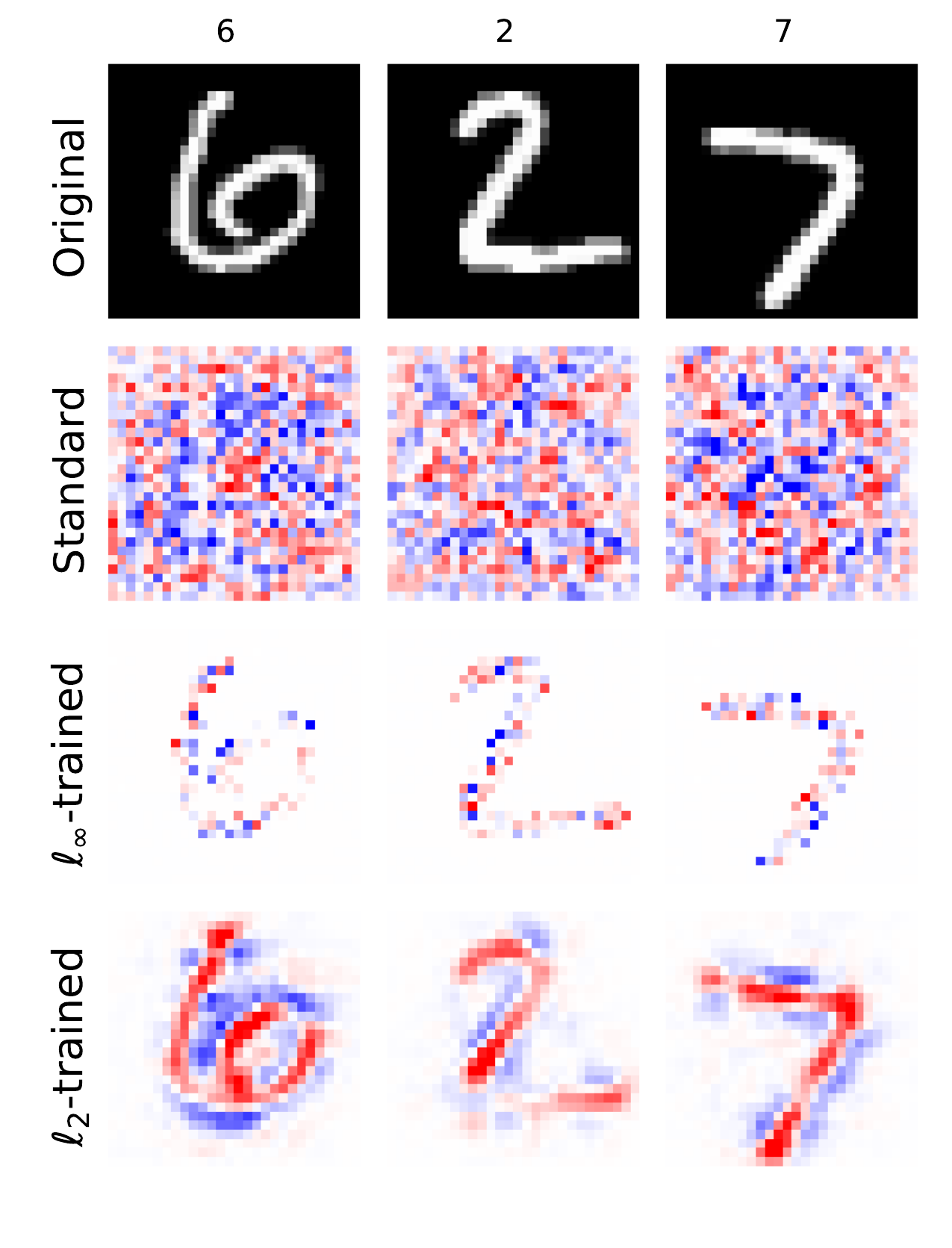} 
	\caption{MNIST}
\end{subfigure}
\hfill
\begin{subfigure}[b]{0.3\textwidth}
	\centering
	\includegraphics[height=2.2in]{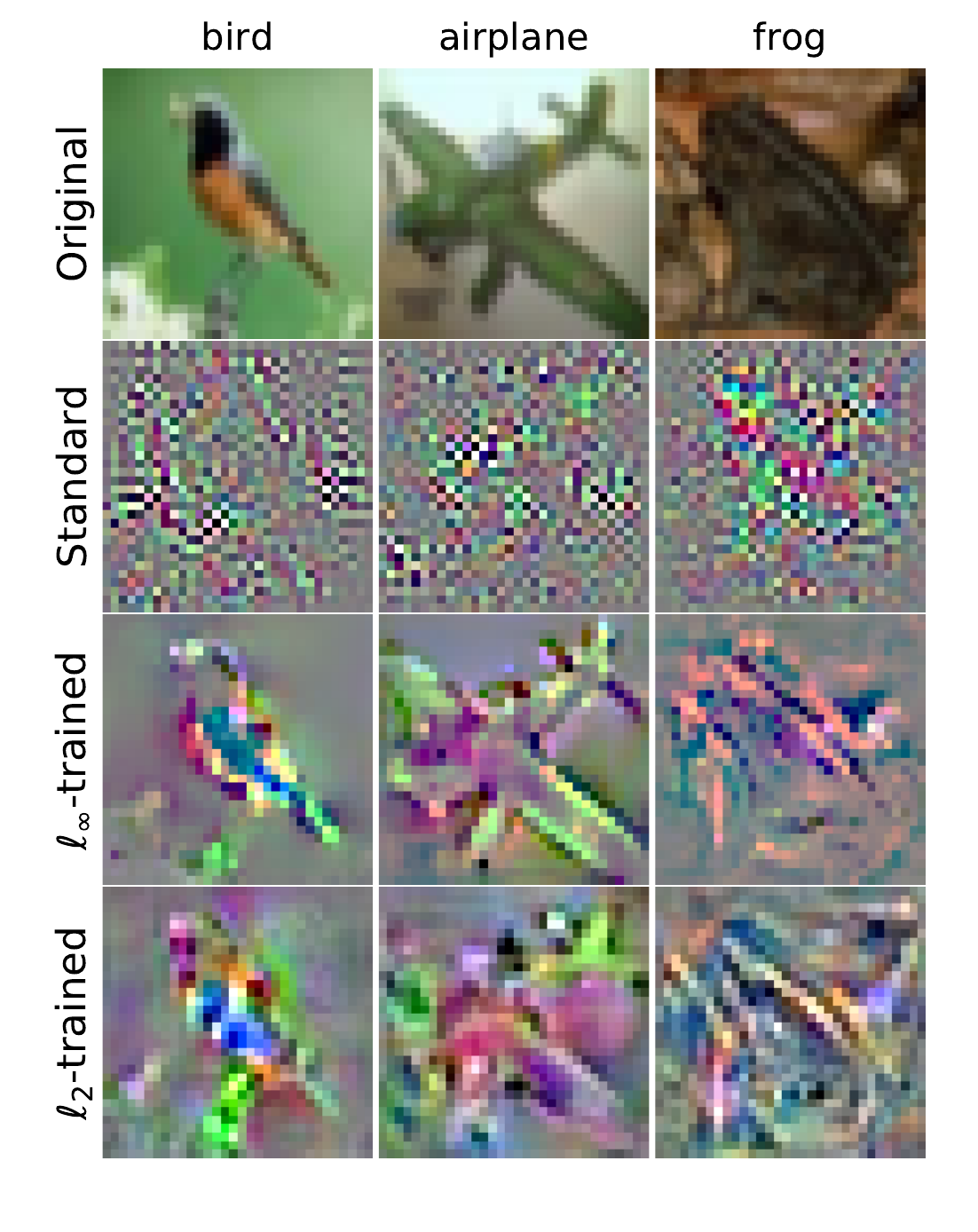} 
	\caption{CIFAR-10}
\end{subfigure}
\hfill
\begin{subfigure}[b]{0.35\textwidth}
	\centering
	\includegraphics[height=2.2in]{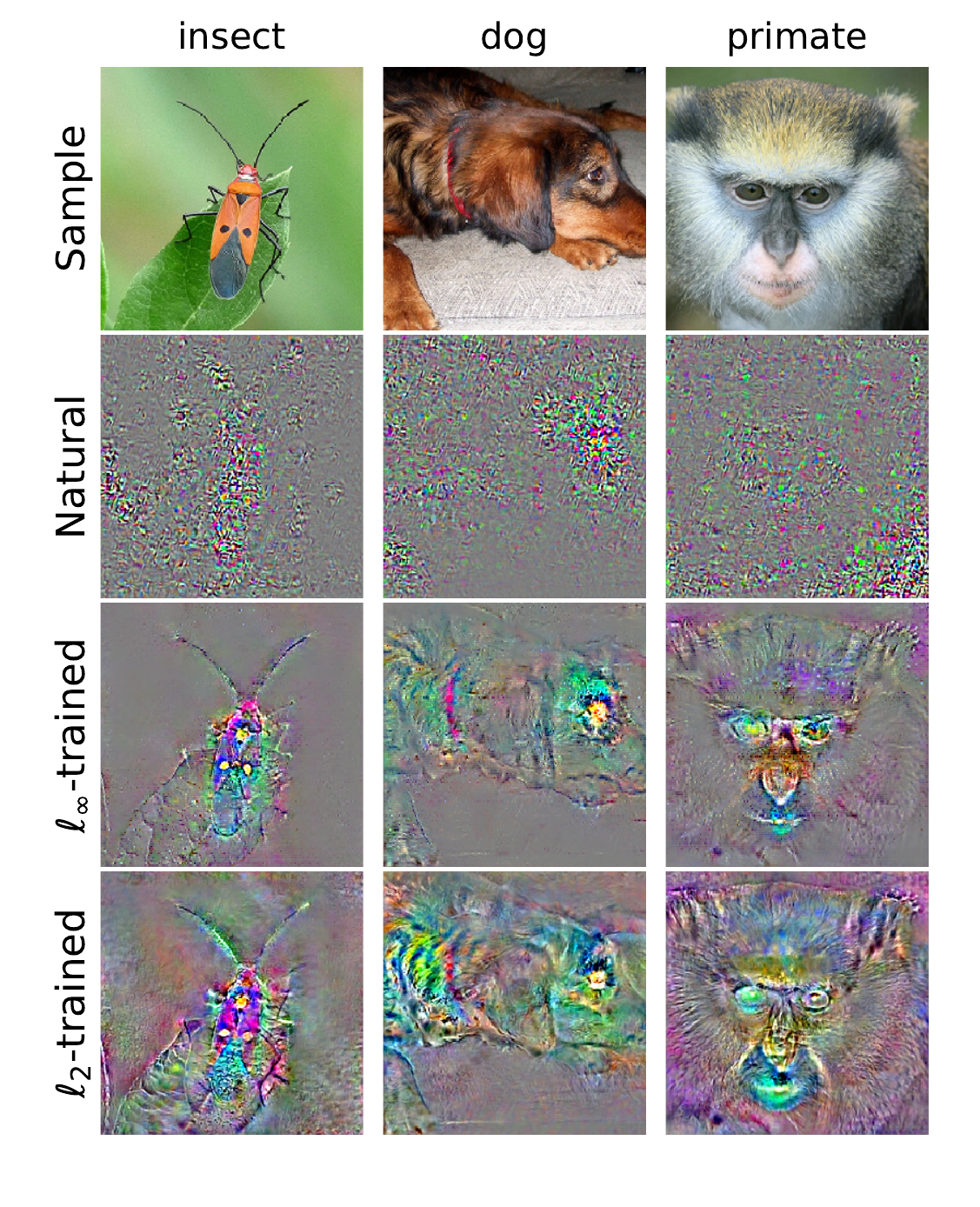} 
	\caption{Restricted ImageNet}
\end{subfigure}
\caption{Visualization of the loss gradient with respect to input pixels. Recall that these gradients highlight the input features which affect the loss most strongly, and thus the classifier's prediction. We observe that the gradients are significantly more human-aligned for adversarially trained networks -- they align well with perceptually relevant features. In contrast, for standard networks they appear very noisy. (For MNIST, blue and red pixels denote positive and negative gradient regions respectively. For CIFAR-10 and ImageNet, we clip gradients to within $\pm 3$ standard deviations of their mean and rescale them to lie in the $[0, 1]$ range.)
Additional visualizations are presented in Figure~\ref{fig:gradients_ext} of Appendix~\ref{app:omitted_figures}.}
\label{fig:gradients}
\end{figure}

\paragraph{Loss gradients in the input space align well with human perception.}
As a starting point, we want to investigate which features of the input most strongly affect the prediction of the classifier both for standard and robust models.
To this end, we visualize the gradients of the loss with respect to individual features (pixels) of the input in Figure~\ref{fig:gradients}.
We observe that gradients for adversarially trained networks align well with perceptually relevant features (such as edges) of the input image.
In contrast, for standard networks, these gradients have no coherent patterns and appear very noisy to humans.
We want to emphasize that no preprocessing was applied to the gradients (other than scaling and clipping for visualization).
So far, extraction of human-aligned information from the gradients of standard networks has only been possible with additional sophisticated techniques~\citep{simonyan2013deep, yosinski2015understanding, olah2017feature}.

This observation effectively outlines an approach to train models that align better with human perception \emph{by design}. 
By encoding the correct prior into the set of perturbations $\Delta$,
adversarial training alone might be sufficient to yield more human-aligned gradients.
We believe that this phenomenon warrants an in-depth investigation and we view our experiments as only exploratory.

\begin{figure}[!t]
	\begin{subfigure}[b]{0.5\textwidth}
		\centering
		\includegraphics[height=1.3in]{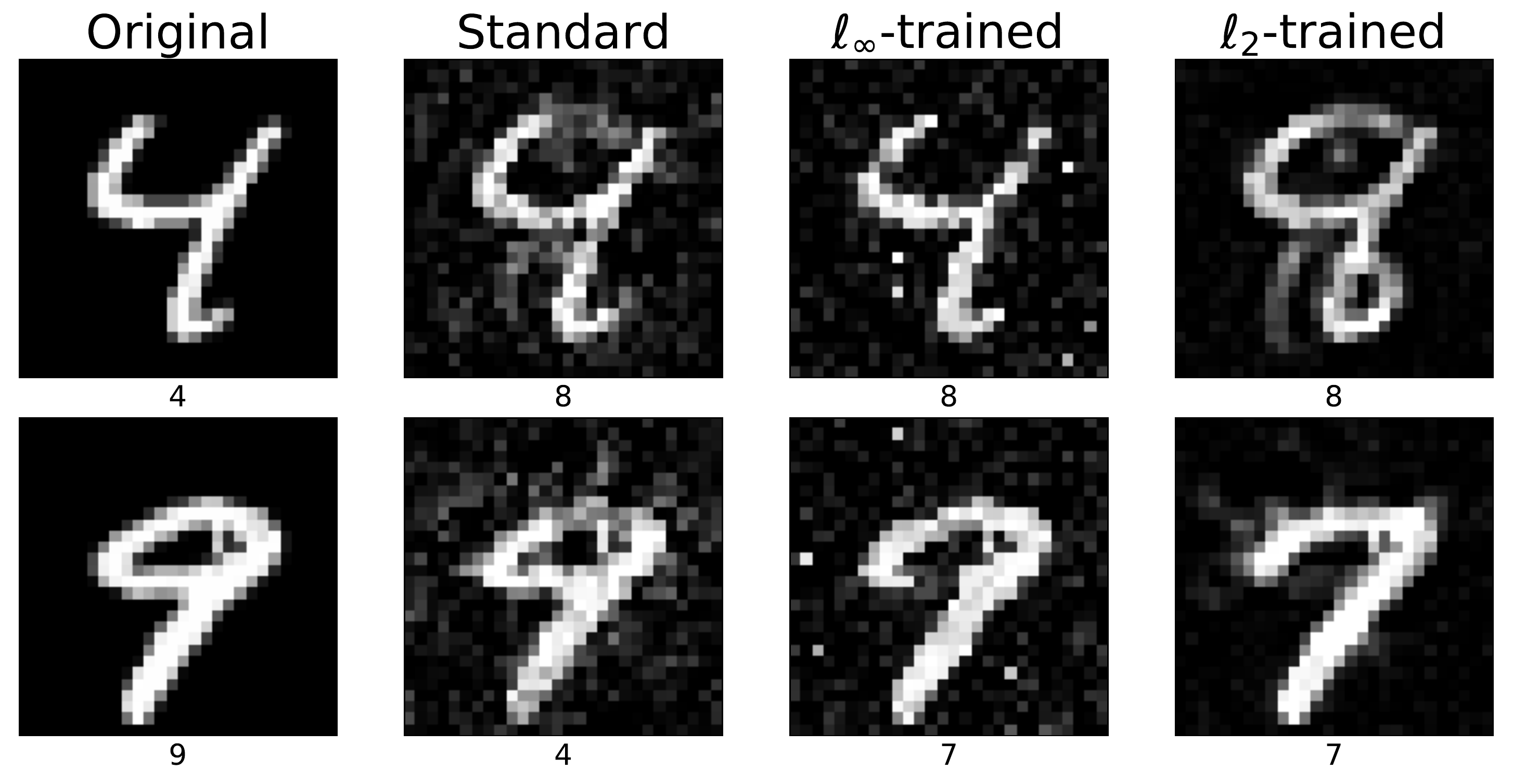} 
		\caption{MNIST}
	\end{subfigure}
	\hfill
	\begin{subfigure}[b]{0.5\textwidth}
		\centering
		\includegraphics[height=1.3in]{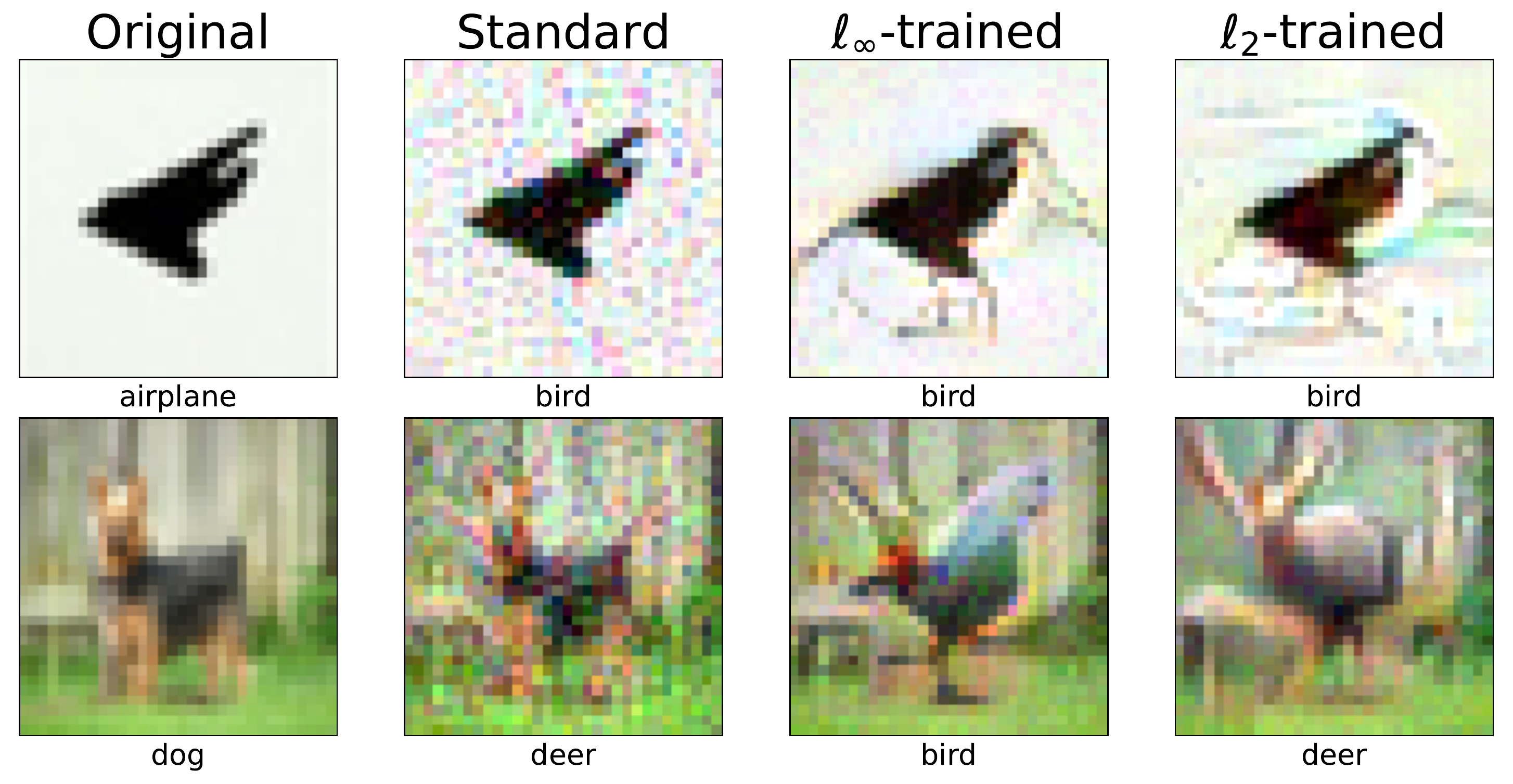} 
		\caption{CIFAR-10}
	\end{subfigure}
	
	\begin{subfigure}[b]{1.0\textwidth}
		\centering
		\includegraphics[height=2.5in]{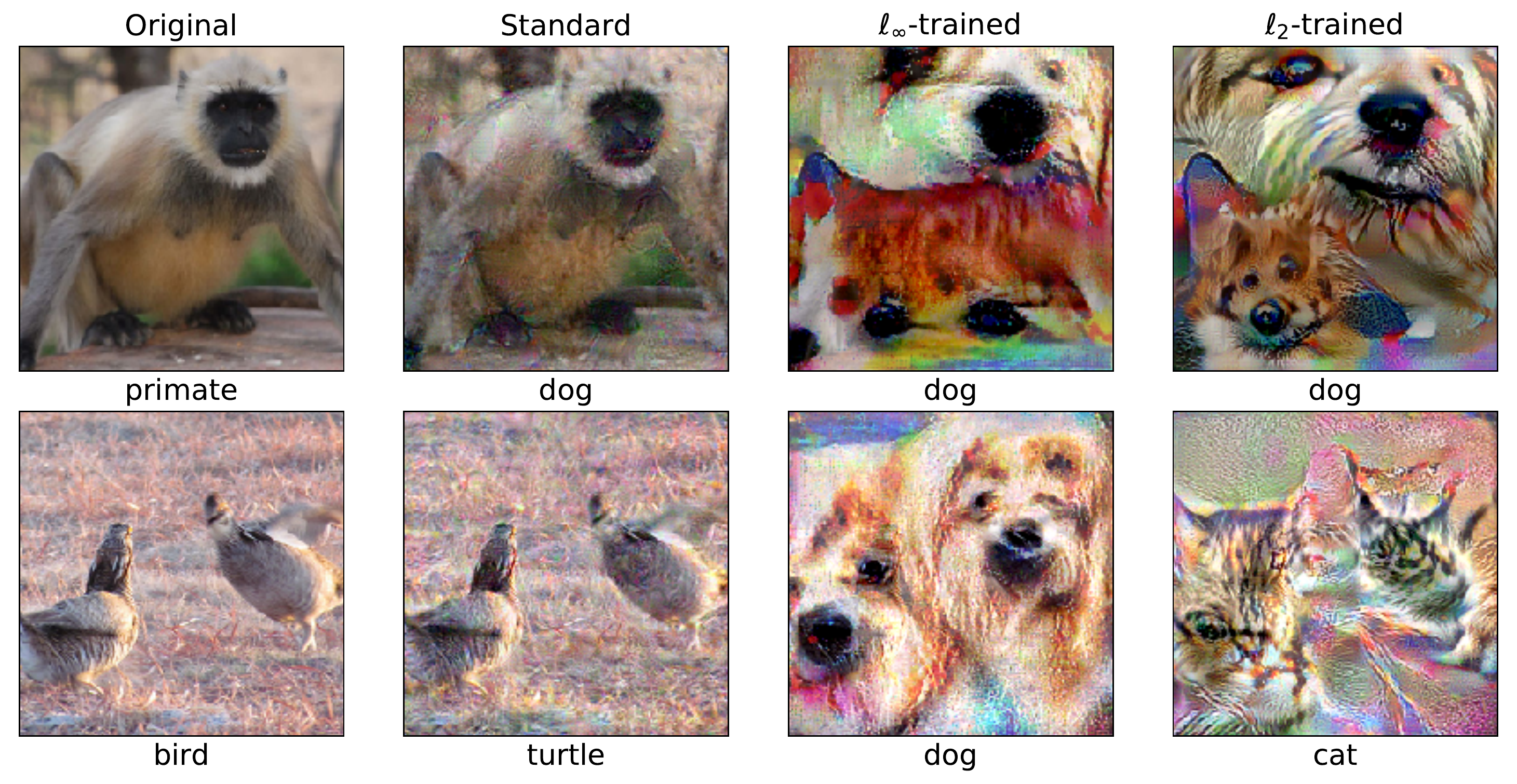} 
		\caption{Restricted ImageNet}
	\end{subfigure}
	
	\caption{Visualizing large-$\eps$ adversarial examples for standard and robust ($\ell_2/\ell_\infty$-adversarial training) models.
    We construct these examples by iteratively following the (negative) loss gradient while staying with $\ell_2$-distance of $\eps$ from the original image. 
    We observe that the images produced for robust models effectively capture salient data characteristics and appear similar to examples of a different class.
    (The value of $\eps$ is equal for all models and much larger than the one used for training.)
    Additional examples are visualized in Figure~\ref{fig:pgd_ext_inf} and~\ref{fig:pgd_ext_two} of Appendix~\ref{app:omitted_figures}.}
	\label{fig:pgd_awesome}
\end{figure}

\paragraph{Adversarial examples exhibit salient data characteristics.}
Given how the gradients of standard and robust models are concentrated on qualitatively different input features, we want to investigate how the adversarial examples of these models appear visually.
To find adversarial examples, we start from a given test image and apply Projected Gradient Descent (PGD; a standard first-order optimization method) to find the image of highest loss within an $\ell_p$-ball of radius $\eps$ around the original image\footnote{To allow for significant image changes, we will use much larger values of $\epsilon$ than those used during training.}.  This procedure will change the pixels that are most influential for a particular model's predictions and thus hint towards how the model is making its predictions.

The resulting visualizations are presented in Figure~\ref{fig:pgd_awesome} (details in Appendix~\ref{sec:setup}). 
Surprisingly, we can observe that adversarial perturbations for robust models tend to produce salient characteristics of another class.
In fact, the corresponding adversarial examples for robust models can often be perceived as samples from that class. 
This behavior is in stark contrast to standard models, for which adversarial examples appear to humans as noisy variants of the input image.

These findings provide additional evidence that adversarial training does not necessarily lead to gradient obfuscation~\citep{athalye2018obfuscated}.
Following the gradient changes the image in a meaningful way and (eventually) leads to images of different classes.
Hence, the robustness of these models is less likely to stem from having gradients that are ill-suited for first-order methods.

\paragraph{Smooth cross-class interpolations via gradient descent.}
By linearly interpolating between the original image and the image produced by PGD we can produce a smooth, ``perceptually plausible'' interpolation between classes (Figure~\ref{fig:traj}).
Such interpolation have thus far been restricted to generative models such as
GANs~\citep{goodfellow2014generative} and VAEs~\citep{kingma2013autoencoding}, involved manipulation of learned representations~\citep{upchurch2017deep},
and hand-designed methods~\citep{suwajanakorn2015makes,kemelmacher2016transfiguring}.
In fact, we conjecture that the similarity of these inter-class trajectories to GAN interpolations is not a coincidence. 
We postulate that the saddle point problem that is key in both these approaches may be at the root of this effect.
We hope that future research will investigate this connection further and  explore how to utilize the loss landscape of robust models as an alternative method to smoothly interpolate between classes.

\begin{figure}[htp]
	\centering
	\includegraphics[width=0.8\textwidth]{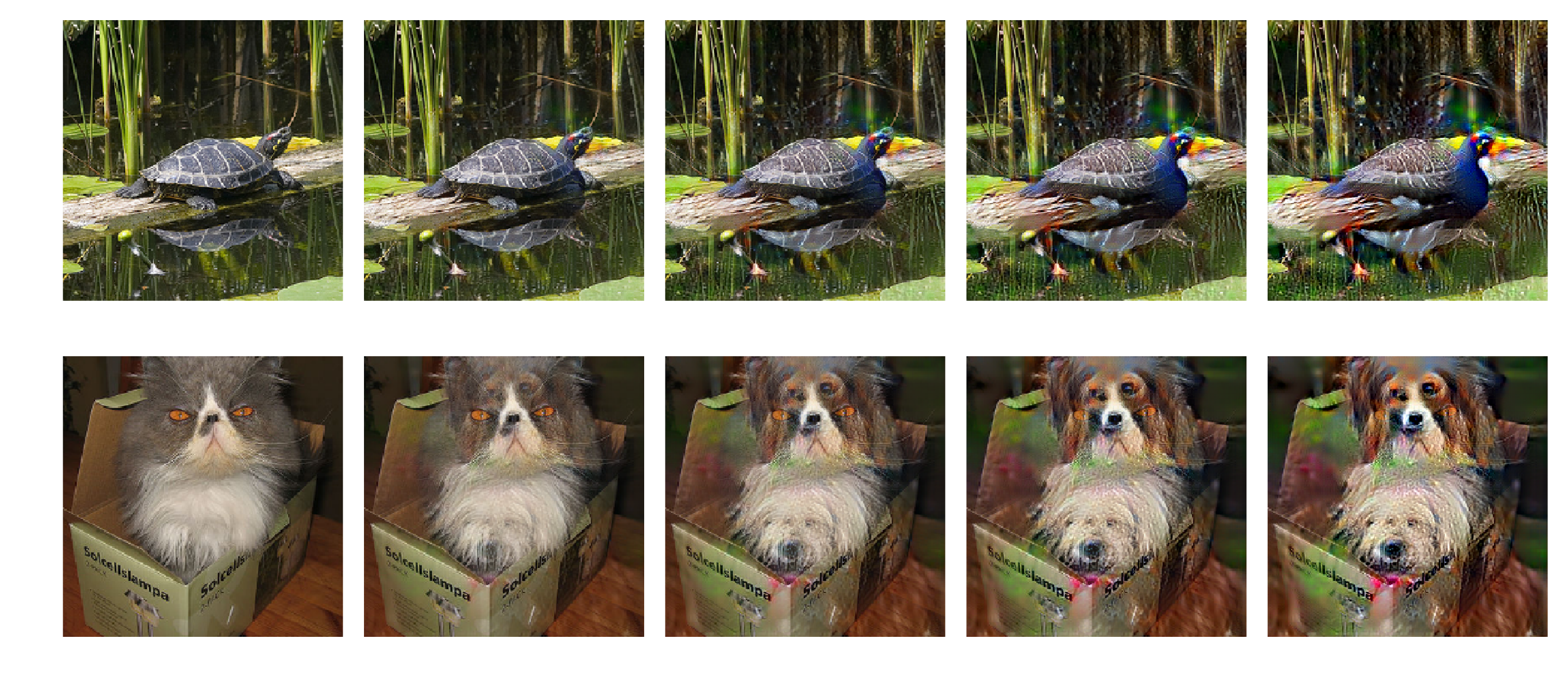} 
	\caption{Interpolation between original image and large-$\eps$ adversarial example as in Figure~\ref{fig:pgd_awesome}.}
	\label{fig:traj}
\end{figure}

\section{Related work}
\label{sec:related}
Due to the large body of related work, we will only focus on the most relevant studies here and defer the full discussion to Appendix~\ref{app:related}.
\cite{fawzi2018analysis} prove upper bounds on the robustness of classifiers and exhibit a standard vs.\ robust accuracy trade-off for specific classifier families on a synthetic task. Their setting also (implicitly) utilizes the notion of robust and non-robust features, however these features have small magnitude rather than weak correlation.
\cite{ross2018improving} propose regularizing the gradient of the classifier
with respect to its input. They find that the resulting classifiers have more
human-aligned gradients and targeted adversarial examples resemble the target class for digit and character recognition tasks.
Finally, there has been recent work proving upper bounds on classifier robustness focusing on the sample complexity of robust learning \citep{schmidt2018adversarially} or using generative assumptions on the data \citep{fawzi2018adversarial}.

\section{Conclusions and future directions}
\label{sec:conclusion}
In this work, we show that the goal of adversarially robust generalization might fundamentally be at odds with that of standard generalization.
Specifically, we identify a trade-off between the standard accuracy and adversarial robustness of a model, that \emph{provably} manifests even in simple settings.
This trade-off stems from intrinsic differences between the feature representations learned by standard and robust models.
Our analysis also potentially explains the drop in standard accuracy observed when employing adversarial training in practice.
Moreover, it emphasizes the need to develop robust training methods, since robustness is unlikely to arise as a consequence of standard training.

Moreover, we discover that even though adversarial robustness comes at a price, it has some unexpected benefits.
Robust models learn meaningful feature representations that align well with salient data characteristics. 
The root of this phenomenon is that the set of adversarial perturbations encodes some prior for human perception.
Thus, classifiers that are robust to these perturbations are also necessarily invariant to input modifications that we expect humans to be invariant to.
We demonstrate a striking consequence of this phenomenon: robust models yield
clean feature interpolations similar to those obtained from generative models
such as GANs~\citep{goodfellow2014generative}.
This emphasizes the possibility of a stronger connection between GANs and adversarial robustness.

Finally, our findings show that the interplay between adversarial robustness and standard classification might be more nuanced that one might expect.
This motivates further work to fully undertand the relative costs and benefits of each of these notions.

\section*{Acknowledgements}
Shibani Santurkar was supported by the National Science Foundation (NSF) under grants IIS-1447786, IIS-1607189, and CCF-1563880, and the Intel Corporation.
Dimitris Tsipras was supported in part by the NSF grant CCF-1553428.
Aleksander M\k{a}dry was supported in part by an Alfred P.~Sloan Research Fellowship, a Google Research Award, and the NSF grant CCF-1553428.

\small
\bibliographystyle{iclr2019_conference}
\bibliography{bibliography/bib.bib}

\appendix
\normalsize
\section{Experimental setup}
\label{sec:setup}

\subsection{Datasets}
\label{sec:dss}
We perform our experimental analysis on the MNIST~\citep{lecun1998mnist}, CIFAR-10~\citep{krizhevsky2009learning} and (restricted) ImageNet~\citep{russakovsky2015imagenet} datasets.
For the ImageNet dataset, adversarial training is significantly harder since the classification problem is challenging by itself and standard classifiers are already computationally expensive to train.
We thus restrict our focus to a smaller subset of the dataset. We group together a subset of existing, semantically similar ImageNet classes into 8 different super-classes, as shown in Table~\ref{tab:classes}.
We train and evaluate only on examples corresponding to these classes.

\begin{table}[!h]
\caption{Classes used in the Restricted ImageNet model. The class ranges are inclusive.}
\begin{center}
  \begin{tabular}{ccc}
    \toprule
    \textbf{Class} & \phantom{x} & \textbf{Corresponding ImageNet Classes} \\
    \midrule
    ``Dog'' &&   151  to 268    \\ 
    ``Cat'' &&   281  to 285    \\
    ``Frog'' &&   30  to 32    \\
    ``Turtle'' &&   33  to 37    \\
    ``Bird'' &&   80  to 100    \\
    ``Primate'' &&   365  to 382    \\
    ``Fish'' &&   389  to 397    \\
    ``Crab'' &&   118  to 121    \\
    ``Insect'' &&   300  to 319    \\
    \bottomrule
  \end{tabular}
\end{center}
\label{tab:classes}
\end{table}

\subsection{Models}
\label{sec:models}

\begin{itemize}
	\item Binary MNIST (Appendix~\ref{sec:empirical_linear_app}): We train a linear classifier with parameters $w \in \mathbb{R}^{784}, b \in \mathbb{R}$ on a binary subtask of MNIST (labels $-1$ and $+1$ correspond to images labelled as ``5'' and ``7'' respectively).
  We use the cross-entropy loss and perform 100 epochs of gradient descent in training. 
	\item MNIST: We use a simple convolutional architecture~\citep{madry2018towards}\footnote{\url{https://github.com/MadryLab/mnist_challenge/}}.
	\item CIFAR-10: We consider a standard ResNet model~\citep{he2016deep}. It has 4 groups of residual layers with filter sizes (16, 16, 32, 64) and 5 residual units each\footnote{\url{https://github.com/MadryLab/cifar10_challenge/}}.
	\item Restricted ImageNet: We use a ResNet-50~\citep{he2016deep}
        architecture using the code from the \texttt{tensorpack}
        repository~\citep{wu2016tensorpack}. We do not modify the model architecture, and change the training procedure only by changing the number of examples per ``epoch'' from 1,280,000 images to 76,800 images.
\end{itemize}

\subsection{Adversarial training}
\label{sec:robo}
We perform adversarial training to train robust classifiers following~\cite{madry2018towards}.
Specifically, we train against a projected gradient descent (PGD) adversary, starting from a random initial perturbation of the training data.
We consider adversarial perturbations in  $\ell_p$ norm where $p = \{2, \infty \}$.
Unless otherwise specified, we use the values of $\epsilon$ provided in Table~\ref{tab:eps} to train/evaluate our models (pixel values in $[0,1]$).
\begin{table}[!h]
\caption{Value of $\epsilon$ used for adversarial training/evaluation of each dataset and $\ell_p$-norm.}
\begin{center}
  \begin{tabular}{cccccc}
    \toprule
    \textbf{Adversary} & \phantom{x} & \textbf{Binary MNIST} & \textbf{MNIST} & \textbf{CIFAR-10} & \textbf{Restricted Imagenet} \\
    \midrule
    $\ell_\infty$ && 0.2 & 0.3 & $\sfrac{4}{255}$ & 0.005 \\ 
    $\ell_2$ && - & 1.5 & 0.314 & 1 \\
    \bottomrule
  \end{tabular}
\end{center}
\label{tab:eps}
\end{table}

\subsection{Adversarial examples for large $\epsilon$}
The images we generated for Figure~\ref{fig:pgd_awesome} were allowed a much larger perturbation from the original sample in order to produce visible changes to the images.
These values are listed in Table~\ref{tab:large_eps}.
\begin{table}[!h]
\caption{Value of $\epsilon$ used for large-$\eps$ adversarial examples of Figure~\ref{fig:pgd_awesome}.}
\begin{center}
  \begin{tabular}{cccccc}
    \toprule
    \textbf{Adversary}  & \textbf{MNIST} & \textbf{CIFAR-10} & \textbf{Restricted Imagenet} \\
    \midrule
    $\ell_\infty$  & 0.3 & 0.125 & 0.25 \\ 
    $\ell_2$  & 4 & 4.7 & 40 \\
    \bottomrule
  \end{tabular}
\end{center}
\label{tab:large_eps}
\end{table}
Since these levels of perturbations would allow to truly change the class of the image, training against such strong adversaries would be impossible.
Still, we observe that smaller values of $\eps$ suffice to ensure that the models rely on the most robust features.

\section{Mixing natural and adversarial examples in each batch}
\label{app:half}
In order to make sure that the standard accuracy drop in Figure~\ref{fig:na_low} is not an artifact of only training on adversarial examples, we experimented with including unperturbed examples in each training batch, following the recommendation of \citep{kurakin2017adversarial}.
We found that while this slightly improves the standard accuracy of the classifier, it usually decreases robust accuracy by a roughly proportional amount, see Table~\ref{tab:half}.

\begin{table}
\caption{Standard and robust accuracy corresponding to robust training with half natural and half adversarial samples. The accuracies correspond to standard, robust and half-half training.}
\label{tab:half}
\begin{center}
\begin{tabular}{ccccccccc}
\toprule
& & & \multicolumn{3}{c}{Standard Accuracy}
                & \multicolumn{3}{c}{Robust Accuracy}\\
& Norm & $\epsilon $& Standard & Half-half & Robust & Standard
    & Half-half & Robust\\
\midrule
\multirow{8}{*}{\rotatebox[origin=c]{90}{MNIST}}
& \multirow{4}{*}{$\ell_\infty$}
& 0      & 99.31\% & - & - & - & - & - \\
& & 0.1  & 99.31\% & 99.43\% & 99.36\% & 29.45\% & 95.29\% & 95.05\% \\
& & 0.2  & 99.31\% & 99.22\% & 98.99\% &  0.05\% & 90.79\% & 92.86\% \\
& & 0.3  & 99.31\% & 99.17\% & 97.37\% &  0.00\% & 89.51\% & 89.92\% \\
\cmidrule{2-9}
& \multirow{4}{*}{$\ell_2$}
    & 0 & 99.31\% & - & - & - & - & - \\
& & 0.5  & 99.31\% & 99.35\% & 99.41\% & 94.67\% & 97.60\% & 97.70\% \\
& & 1.5  & 99.31\% & 99.29\% & 99.24\% & 56.42\% & 87.71\% & 88.59\% \\
& & 2.5  & 99.31\% & 99.12\% & 97.79\% & 46.36\% & 60.27\% & 63.73\% \\
\midrule
\multirow{8}{*}{\rotatebox[origin=c]{90}{CIFAR10}}
& \multirow{4}{*}{$\ell_\infty$}
& $0$    & 92.20\% & - & - & - & - & - \\
& & $\sfrac{2}{255}$  & 92.20\% & 90.13\% & 89.64\% & 0.99\% & 69.10\% & 69.92\% \\
& & $\sfrac{4}{255}$  & 92.20\% & 88.27\% & 86.54\% & 0.08\% & 55.60\% & 57.79\% \\
& & $\sfrac{8}{255}$  & 92.20\% & 84.72\% & 79.57\% & 0.00\% & 37.56\% & 41.93\% \\
\cmidrule{2-9}
& \multirow{4}{*}{$\ell_2$}
    & $0$ & 92.20\% & - & - & - & - & - \\
& & $\sfrac{20}{255}$ & 92.20\% & 92.04\% & 91.77\% & 45.60\% & 83.94\% & 84.70\% \\
& & $\sfrac{80}{255}$ & 92.20\% & 88.95\% & 88.38\% & 8.80\% & 67.29\% & 68.69\% \\
& & $\sfrac{320}{255}$ & 92.20\% & 81.74\% & 75.75\% & 3.30\% & 34.45\% & 39.76\% \\
\bottomrule
\end{tabular}
\end{center}
\end{table}

\section{Proof of Theorem~\ref{thm:lunch}}
\label{app:pf-lunch}

The main idea of the proof is that an adversary with $\eps=2\eta$ is able to change the distribution of features $x_2,\ldots,x_{d+1}$ to reflect a label of $-y$ instead of $y$ by subtracting $\eps y$ from each variable.
Hence any information that is used from these features to achieve better standard accuracy can be used by the adversary to reduce adversarial accuracy.
We define $G_+$ to be the distribution of  $x_2,\ldots,x_{d+1}$ when $y=+1$ and $G_-$ to be that distribution when $y=-1$.
We will consider the setting where $\eps=2\eta$ and fix the adversary that replaces $x_i$ by $x_i-y\eps$ for each $i\geq 2$.
This adversary is able to change $G_+$ to $G_-$ in the adversarial setting and vice-versa.

Consider any classifier $f(x)$ that maps an input $x$ to a class in $\{-1, +1\}$.
Let us fix the probability that this classifier predicts class $+1$ for some fixed value of $x_1$ and distribution of $x_2,\ldots,x_{d+1}$.
Concretely, we define $p_{ij}$ to be the probability of predicting $+1$ given that the first feature has sign $i$ and the rest of the features are distributed according to $G_j$.
Formally,
\begin{align*}
p_{++} &= \Pr_{x_{2,\ldots,d+1}\sim G_+}(f(x)=+1 \mid x_1=+1),\\
p_{+-} &= \Pr_{x_{2,\ldots,d+1}\sim G_-}(f(x)=+1 \mid x_1=+1),\\
p_{-+} &= \Pr_{x_{2,\ldots,d+1}\sim G_+}(f(x)=+1 \mid x_1=-1),\\
p_{--} &= \Pr_{x_{2,\ldots,d+1}\sim G_-}(f(x)=+1 \mid x_1=-1).
\end{align*}

Using these definitions, we can express the standard accuracy of the classifier as
\begin{align*}
\Pr(f(x) = y) &= \Pr(y=+1)\left(p\cdot p_{++} + (1-p)\cdot p_{-+}\right)\\
     &\qquad+\Pr(y=-1)\left(p\cdot (1-p_{--})+(1-p)\cdot(1-p_{+-})\right)\\
    &= \frac{1}{2}\left(p\cdot p_{++} + (1-p)\cdot p_{-+}
            +p\cdot (1-p_{--}) + (1-p)\cdot(1-p_{+-})\right)\\
    &= \frac{1}{2}\left(p\cdot (1+p_{++}-p_{--})
                        + (1-p)\cdot(1+p_{-+}-p_{+-}) )\right).
\end{align*}
Similarly, we can express the accuracy of this classifier against the adversary that replaces $G_+$ with $G_-$ (and vice-versa) as
\begin{align*}
\Pr(f(x_{\text{adv}}) = y)
    &= \Pr(y=+1)\left(p\cdot p_{+-} + (1-p)\cdot p_{--}\right)\\
     &\qquad+\Pr(y=-1)\left(p\cdot(1-p_{-+})+(1-p)\cdot(1-p_{++})\right)\\
    &= \frac{1}{2}\left(p\cdot p_{+-} + (1-p)\cdot p_{--}
            +p\cdot (1-p_{-+}) + (1-p)\cdot(1-p_{++})\right)\\
    &= \frac{1}{2}\left(p\cdot (1+p_{+-}-p_{-+})
                        + (1-p)\cdot (1+p_{--}-p_{++}) )\right).
\end{align*}
For convenience we will define $a=1-p_{++}+p_{--}$
                           and $b=1-p_{-+}+p_{+-}$.
Then we can rewrite
\begin{align*}
\text{standard accuracy}:&\quad
    \frac{1}{2}(p(2-a) + (1-p)(2-b))\\
    &\quad =1 - \frac{1}{2}(pa + (1-p)b),\\
\text{adversarial accuracy}:&\quad
    \frac{1}{2}((1-p)a + pb).\\
\end{align*}
We are assuming that the standard accuracy of the classifier is at least $1-\delta$ for some small $\delta$.
This implies that 
\[1-\frac{1}{2} (pa+(1-p)b) \geq 1-\delta\implies
    pa+(1-p)b \leq 2\delta.\]
Since $p_{ij}$ are probabilities, we can guarantee that $a\geq0$.
Moreover, since $p\geq 0.5$, we have $p/(1-p)\geq 1$.
We use these to upper bound the adversarial accuracy by
\begin{align*}
\frac{1}{2}((1-p)a+pb)
    &\leq \frac{1}{2}\left((1-p)\frac{p^2}{(1-p)^2}a + pb\right)\\
    &= \frac{p}{2(1-p)}(pa + (1-p)b)\\
    &\leq \frac{p}{1-p}\delta.\\
\end{align*}
\hfill\qed

\section{Proof of Theorem~\ref{thm:svm}}
\label{app:pf-svm}

We consider the problem of fitting the distribution $\D$ of~\eqref{eq:model} by using a standard soft-margin SVM classifier. Specifically, this can be formulated as:
\begin{equation}\label{eq:svm}
 \min_w \E \brackets*{\max(0, 1 - y w^\top x)}
        + \frac{1}{2} \lambda \|w\|^2_2 
\end{equation}
for some value of $\lambda$.
We will assume that we tune $\lambda$ such that the optimal solution $w^*$ has $\ell_2$-norm of $1$.
This is without much loss of generality since our proofs can be adapted to the general case.
We will refer to the first term of~\eqref{eq:svm} as the \emph{margin} term and the second term as the \emph{regularization} term.

First we will argue that, due to symmetry, the optimal solution will assign equal weight to all the features $x_i$ for $i=2,\ldots,d+1$.
\begin{lemma}
	\label{lem:symmetry}
	Consider an optimal solution $w^*$ to the optimization problem~\eqref{eq:svm}. Then,
\[w^*_i = w^*_j \; \; \forall \; i,j \in \{2, ..., d + 1\}.\]
\end{lemma}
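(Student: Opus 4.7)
The plan is to exploit the permutation symmetry of the distribution $\D$ over the coordinates $x_2,\ldots,x_{d+1}$, combined with the strict convexity of the SVM objective. First I would observe that conditioned on $y$, the variables $x_2,\ldots,x_{d+1}$ are i.i.d.\ from $\N(\eta y,1)$. Hence for any permutation $\pi$ of the index set $\{2,\ldots,d+1\}$, the law of the permuted vector $P_\pi x$ equals the law of $x$, where $P_\pi$ denotes the permutation matrix that permutes the coordinates $2,\ldots,d+1$ according to $\pi$ and fixes coordinate $1$. Since $P_\pi$ is orthogonal and $\langle w, P_\pi x\rangle = \langle P_\pi^\top w, x\rangle$, this invariance of the distribution translates into invariance of the objective in~\eqref{eq:svm}: writing $F(w)$ for that objective, we have $F(P_\pi^\top w) = F(w)$ for every such $\pi$, because the expectation term is unchanged (the distribution of $y\langle P_\pi^\top w, x\rangle$ equals that of $y\langle w, x\rangle$) and the regularizer is permutation invariant.

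Next I would invoke strict convexity. The hinge loss $u\mapsto \max(0,1-u)$ is convex, so the expected margin term is convex in $w$; the regularizer $\tfrac{\lambda}{2}\|w\|_2^2$ is strictly convex for $\lambda>0$ (which we may assume, since the value of $\lambda$ is tuned so that the optimizer has unit norm, ruling out $\lambda=0$). Hence $F$ is strictly convex and admits a unique minimizer $w^\ast$.

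Combining these two facts, for every permutation $\pi$ of $\{2,\ldots,d+1\}$ the vector $P_\pi^\top w^\ast$ is also a minimizer of $F$, so by uniqueness $P_\pi^\top w^\ast = w^\ast$. Since the action of the symmetric group on the coordinates $2,\ldots,d+1$ is transitive, this forces $w^\ast_i = w^\ast_j$ for all $i,j \in \{2,\ldots,d+1\}$, which is exactly the claim. (Alternatively, one could average $w^\ast$ over the symmetric group to produce a symmetric optimizer and then invoke uniqueness; both routes give the same conclusion.)

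The main obstacle I anticipate is the dependence on strict convexity and the implicit assumption $\lambda>0$: if one allowed $\lambda=0$ the minimizer need not be unique and only the averaging argument would produce a symmetric optimum, without pinning down $w^\ast$ itself. Given the paper's normalization $\|w^\ast\|_2=1$, however, $\lambda>0$ is the relevant regime, so the uniqueness-based argument goes through cleanly.
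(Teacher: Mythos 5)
Your proposal is correct, and it rests on the same two ingredients as the paper's proof---exchangeability of $x_2,\ldots,x_{d+1}$ and the $\ell_2$ regularizer with $\lambda>0$---but it reaches the conclusion through a somewhat different mechanism. The paper argues locally and by contradiction: if $w^*_i \neq w^*_j$, it swaps those two coordinates to obtain $\hat{w}$ with the same objective value, then averages $w^*$ and $\hat{w}$; convexity of the hinge (margin) term keeps that term from increasing (Jensen), while the squared $\ell_2$ norm strictly decreases because $w^* \neq \hat{w}$, contradicting optimality of $w^*$. This never requires knowing that the minimizer is unique. You instead note that the objective is strictly convex for $\lambda>0$, hence has a unique minimizer, and that it is invariant under every permutation of coordinates $2,\ldots,d+1$, so the unique minimizer must be a fixed point of the whole symmetric-group action, forcing those coordinates to be equal. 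Both routes are valid: yours is the standard ``symmetrization via uniqueness'' argument and handles all permutations at once, while the paper's swap-and-average version is more elementary, exhibits an explicitly better point, and sidesteps any discussion of existence or uniqueness (the lemma already posits an optimal $w^*$). The caveat you flag about $\lambda=0$ applies equally to the paper's proof, since without the regularizer the averaging step yields no strict improvement either; the unit-norm normalization of $w^*$ indeed places us in the $\lambda>0$ regime.
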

\begin{proof}
Assume that $\exists\, i, j \in   \{2, ..., d + 1\}$ such that $w^*_i \neq w^*_j$.
Since the  distribution of $x_i$ and $x_j$ are identical, we can swap the value of $w_i$ and $w_j$, to get an alternative set of parameters $\hat{w}$ that has the same loss function value ($\hat w_j=w_i$, $\hat w_i=w_j$, $\hat w_k = w_k$ for $k\neq i,j$).

Moreover, since the margin term of the loss is convex in $w$, using Jensen's inequality, we get that averaging $w^*$ and $\hat w$ will not increase the value of that margin term.
Note, however, that $ \norm{\frac{w^* + \hat{w}}{2}}_2 < \norm{w^*}_2$, hence the regularization loss is strictly smaller for the average point.
This contradicts the optimality of $w^*$.
\end{proof}

Since every optimal solution will assign equal weight to all $x_i$ for $k\geq 2$, we can replace these features by their sum (and divide by $\sqrt d$ for convenience).
We will define
\[z = \frac{1}{\sqrt d}\sum_{i=2}^{d+1}x_i,\]
which, by the properties of the normal distribution, is distributed as
\[z\sim \N(y\eta\sqrt d, 1).\]
By assigning a weight of $v$ to that combined feature the optimal solutions can be parametrized as
\[w^\top x = w_1x_1 + vz,\]
where the regularization term of the loss is $\lambda(w_1^2 + v^2)/2$.

Recall that our chosen value of $\eta$ is $4/\sqrt d$, which implies that the contribution of $vz$ is distributed normally with mean $4yv$ and variance $v^2$.
By the concentration of the normal distribution, the probability of $vz$ being larger than $v$ is large.
We will use this fact to show that the optimal classifier will assign on $v$ at least as much weight as it assigns on $w_1$.

\begin{lemma}
	\label{lem:lb}
	Consider the optimal solution $(w_1^*, v^*)$ of the problem~\eqref{eq:svm}.
    Then
	\[ v^* \geq \frac{1}{\sqrt 2} .\]
\end{lemma}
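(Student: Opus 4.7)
The plan is to combine convexity with a symmetry/swap argument. The objective~\eqref{eq:svm} is convex, and $\lambda$ is chosen so that the optimum satisfies $\|w^*\|_2 = 1$; by the equal-weight reduction of Lemma~\ref{lem:symmetry} this becomes $w_1^{*2} + v^{*2} = 1$, so the claim $v^* \geq 1/\sqrt{2}$ is equivalent to $v^* \geq w_1^*$. I will therefore show that whenever $w_1 > v > 0$ with $w_1^2 + v^2 = 1$, swapping to $(v, w_1)$ strictly decreases the loss --- ruling out any optimum with $w_1^* > v^*$ by convexity.

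The regularization term $\tfrac{\lambda}{2}(w_1^2 + v^2)$ is invariant under the swap, so the comparison reduces to the hinge term. Introducing the sign-symmetrized features $\tilde x_1 := y x_1 \in \{\pm 1\}$ (taking $+1$ with probability $p$) and $\tilde z := y z \sim \N(\eta\sqrt d, 1)$, which are independent, the margin $y(w_1 x_1 + v z)$ equals $w_1 \tilde x_1 + v \tilde z$. Writing $\tilde z = \eta\sqrt d + N$ with $N \sim \N(0,1)$ and conditioning on $\tilde x_1 \in \{\pm 1\}$, the margin loss at coefficients $(a,b)$ takes the closed form
\[p\, b\, \psi(s_+) + (1-p)\, b\, \psi(s_-), \quad \text{where } s_\pm := (1 \mp a)/b - \eta\sqrt d \text{ and } \psi(c) := \E[(c - N)_+] = c\Phi(c) + \phi(c).\]

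The desired swap inequality then becomes an explicit inequality between four $\psi$-values. Since $\eta\sqrt d \geq 4$, all four arguments lie in the left tail of $\psi$, where $\psi$ is convex and decays rapidly. The dominant contribution is the $(1-p)$-part (corresponding to $\tilde x_1 = -1$), whose $\psi$-arguments are only moderately negative; here, a short calculation using the monotonicity and convexity of $\psi$ together with $w_1 > v$ shows that the swap strictly lowers this part. The $p$-part (corresponding to $\tilde x_1 = +1$) has much more negative arguments, so $\psi$ is exponentially small there; the constraint $p \leq 0.975$ guarantees that even if this part moves the ``wrong way'' under the swap, it cannot offset the gain from the $(1-p)$-part.

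The main obstacle I anticipate is making the quantitative comparison uniform in $(w_1, v)$ on the relevant arc. I expect to split the analysis into a near-diagonal regime ($w_1 \approx v$), where $\psi$ can be linearized around common arguments using $\psi'(c) = \Phi(c)$, and a boundary regime ($w_1 \to 1$, $v \to 0$), where one of the swapped $\psi$-arguments tends to $-\infty$ and the corresponding term vanishes, trivializing the inequality. The hypothesis $\eta \geq 4/\sqrt d$ keeps all relevant $\psi$-arguments in the regime where standard Gaussian-tail bounds (e.g.\ $\psi(c) \leq \phi(c)/|c|$ for $c \ll 0$) are tight enough to close the estimate.
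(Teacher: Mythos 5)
Your setup is sound: the reduction to the unit arc via Lemma~\ref{lem:symmetry}, the closed form $p\,b\,\psi(s_+)+(1-p)\,b\,\psi(s_-)$ with $\psi(c)=c\Phi(c)+\phi(c)$, and the qualitative reading of the two terms (the $(1-p)$-part improves under the swap while the $p$-part moves the wrong way and must be absorbed using $p\le 0.975$) are all correct. The genuine gap is that the central claim --- that the swap $(w_1,v)\mapsto(v,w_1)$ strictly lowers the hinge term \emph{uniformly} over the arc $w_1>v>0$ --- is never actually established. You assert that ``a short calculation shows'' the $(1-p)$-part decreases and that the $p$-part ``cannot offset the gain,'' and you yourself flag uniformity in $(w_1,v)$ as the main unresolved obstacle. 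Near the diagonal both effects vanish to first order, so you would need a sign analysis of the derivative of the loss along the arc at $w_1=v$ (a genuine Gaussian-tail comparison in which the adverse $p$-term, controlled by factors like $\phi(s_+)/\Phi(s_-)$ with $s_+=s_--2$, is beaten using the slack $p/(1-p)\le 39$, uniformly in $\eta\sqrt d\ge 4$); away from the diagonal you would need uniform two-sided bounds on $\psi$. None of this is carried out, so what you have is a plan whose hardest step is deferred. Two smaller issues: with unit norm the claim $v^*\ge 1/\sqrt 2$ is equivalent to $v^*\ge |w_1^*|$, and a swap argument restricted to $w_1>v>0$ does not by itself exclude optima with $v^*\le 0$ or $w_1^*<0$; also, convexity plays no role in your final step --- the swap point alone would already contradict optimality.

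It is worth noting that the paper sidesteps all of this by proving only what the lemma needs: assume $v^*<1/\sqrt2$ and compare against the single fixed classifier $(w_1,v)=(0,1)$. Restricting to the event $x_1=-y$ (probability $1-p$) gives the lower bound $(1-p)\,\E[\max(0,1+w_1-\N(4v,v^2))]>0.016\,(1-p)$ for the candidate, while the alternative has margin loss $\E[\max(0,1-\N(4,1))]<0.0004$; since both have unit norm the regularizer cancels, and $p\le 0.975$ closes the comparison. Your route aims at a much stronger pointwise monotonicity statement along the whole arc, which is not needed for the lemma and is substantially harder to make rigorous; if you want to salvage your approach with less work, replace the uniform swap inequality by the paper-style comparison of an arbitrary point with $v<1/\sqrt2$ against the single endpoint $(0,1)$, using your $\psi$-formula to evaluate both sides.
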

\begin{proof}
Assume for the sake of contradiction that $v^*< 1/\sqrt{2}$.
Then, with probability at least $1-p$, the first feature predicts the wrong label and without enough weight, the remaining features cannot compensate for it.
Concretely,
\begin{align*}
\E[\max(0,1-yw^\top x)] 
    &\geq (1-p)\ \E\left[\max\left(0,1+w_1
        - \N\left(4v,v^2\right)\right)\right] \\
    &\geq (1-p)\ \E\left[\max\left(0,1+\frac{1}{\sqrt 2}
        - \N\left(\frac{4}{\sqrt 2},\frac{1}{2}\right)\right)\right]\\
    &> (1-p)\cdot 0.016.
\end{align*}

We will now show that a solution that assigns zero weight on the first feature ($v=1$ and $w_1=0$), achieves a better margin loss.
\begin{align*}
\E[\max(0,1-yw^\top x)] 
    &= \E\left[\max\left(0,1 - \N\left(4,1\right)\right)\right] \\
    &< 0.0004.
\end{align*}

Hence, as long as $p\leq 0.975$, this solution has a smaller margin loss than the original solution.
Since both solutions have the same norm, the solution that assigns weight only on $v$ is better than the original solution $(w_1^*, v^*)$, contradicting its optimality.
\end{proof}

We have established that the learned classifier will assign more weight to $v$ than $w_1$.
Since $z$ will be at least $y$ with large probability, we will show that the behavior of the classifier depends entirely on $z$.

\begin{lemma}
    \label{lem:svm_nat}
    The standard accuracy of the soft-margin SVM learned for problem~\eqref{eq:svm} is at least $99\%$.
\end{lemma}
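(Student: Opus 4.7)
The plan is to use Lemmas~\ref{lem:symmetry} and~\ref{lem:lb} to pin down the structure of the optimum $w^*$ and then directly bound the misclassification probability of the resulting linear rule via standard Gaussian tail estimates.

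\textbf{Reducing to two parameters.} By Lemma~\ref{lem:symmetry} I restrict attention to the two-parameter family $(w_1, v)$, so that $\sign(w^\top x) = \sign(w_1 x_1 + v z)$ with $z = \frac{1}{\sqrt{d}}\sum_{i \geq 2} x_i \sim \N(\eta\sqrt{d}\, y,\, 1)$. Because $v$ absorbs the factor $\sqrt{d}$ (if $w_i = c$ for $i\ge 2$, then $v = c\sqrt d$ and $\sum_{i\ge 2} w_i^2 = v^2$), the normalization $\|w^*\|_2 = 1$ translates into $(w_1^*)^2 + (v^*)^2 = 1$. Combining this identity with the lower bound $v^* \geq 1/\sqrt{2}$ from Lemma~\ref{lem:lb} yields $|w_1^*| \leq 1/\sqrt{2}$, and in particular $|w_1^*|/v^* \leq 1$. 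Flipping the signs of both $w_1^*$ and $v^*$ simultaneously preserves the loss and the accuracy (by symmetry of the label distribution), so I may take $w_1^*, v^* \geq 0$.

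\textbf{Conditioning and Gaussian tails.} Writing $z = \eta\sqrt{d}\, y + \xi$ with $\xi \sim \N(0,1)$ independent of $x_1$ and $y$, the event ``classifier correct'' becomes $y(w_1^* x_1 + v^* z) > 0$, which (using $y^2 = 1$ and $y\xi \stackrel{d}{=} \xi$) simplifies to $\xi > -(w_1^*/v^*)(yx_1) - \eta\sqrt{d}$. I then split on $yx_1$: the ``easy'' branch $yx_1 = +1$ (probability $p$) demands $\xi > -(w_1^*/v^* + \eta\sqrt{d}) \leq -\eta\sqrt{d} \leq -4$, while the ``hard'' branch $yx_1 = -1$ (probability $1-p$) demands $\xi > w_1^*/v^* - \eta\sqrt{d} \leq 1 - 4 = -3$.

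\textbf{Putting it together.} The standard accuracy is therefore at least $p\,\Pr[\xi > -4] + (1-p)\,\Pr[\xi > -3] \geq \Pr[\xi > -3] \geq 0.998 > 0.99$, which finishes the proof. The one delicate point is the hard branch, where the first feature actively disagrees with the label: the inequality $|w_1^*|/v^* \leq 1$, delivered by combining Lemma~\ref{lem:lb} with the norm constraint, is precisely what keeps the Gaussian threshold at $-3$ rather than something larger that would jeopardize the $99\%$ conclusion. Everything else reduces to a routine evaluation of the standard normal CDF.
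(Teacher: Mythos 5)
Your proof is correct and is essentially the paper's own argument: Lemma~\ref{lem:lb} together with the unit-norm constraint gives $|w_1^*|\le 1/\sqrt{2}\le v^*$, and a standard Gaussian tail bound (threshold at worst $-3$ since $\eta\sqrt{d}\ge 4$ and $|w_1^*|/v^*\le 1$) shows the aggregate feature $z$ determines the sign with probability well above $99\%$, independently of $x_1$. One small blemish: your sign-flip WLOG is misjustified---replacing $w^*$ by $-w^*$ does \emph{not} preserve the hinge loss or the accuracy since the distribution of $yw^{\top}x$ is not symmetric about zero---but this is harmless, because the bound $|w_1^*|/v^*\le 1$ already makes both branch thresholds at most $-3$ and $-4$ (in one order or the other) regardless of the sign of $w_1^*$.
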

\begin{proof}
By Lemma~\ref{lem:lb}, the classifier predicts the sign of $w_1x_1 + vz$ where $vz\sim\N(4yv, v^2)$ and $v\geq 1/\sqrt 2$.
Hence with probability at least $99\%$, $vzy > 1/\sqrt 2 \geq w_1$ and thus the predicted class is $y$ (the correct class) independent of $x_1$.
\end{proof}

We can utilize the same argument to show that an adversary that changes the distribution of $z$ has essentially full control over the classifier prediction.

\begin{lemma}
    \label{lem:svm_adv}
    The adversarial accuracy of the soft-margin SVM learned for~\eqref{eq:svm} is at most 1\% against an $\ell_\infty$-bounded adversary of $\epsilon=2\eta$.
\end{lemma}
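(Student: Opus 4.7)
The plan is to mirror the argument of Lemma~\ref{lem:svm_nat}, but now with the adversary shifting every feature in the direction that fights the classifier. Since $v>0$ by Lemma~\ref{lem:lb} and, by the same symmetry argument as in Lemma~\ref{lem:symmetry}, we may take $w_1\ge 0$, the worst-case $\ell_\infty$ perturbation of magnitude $\eps=2\eta$ is the one that sets $x_1' = x_1 - \eps y$ and $x_i' = x_i - \eps y$ for every $i\geq 2$. The first step, then, is simply to identify this adversary and write down the distribution of the classifier's decision variable under it.

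Next I would compute the shifted distribution of the combined feature. Using $z=\frac{1}{\sqrt d}\sum_{i=2}^{d+1}x_i$ and $\eta=4/\sqrt d$, the adversary's shift of each $x_i$ by $-\eps y=-2\eta y$ moves $z$ by $-2\eta y\sqrt d=-8y$, so
\[
v\,z'\,y \sim \mathcal{N}\!\left(-4v,\, v^2\right).
\]
For the first feature, $x_1'y = x_1y-\eps$, which takes value $1-\eps$ with probability $p$ and $-1-\eps$ with probability $1-p$, so $w_1 x_1' y \le w_1(1-\eps)$ deterministically, and $w_1\leq\sqrt{1-v^2}\le 1/\sqrt 2$ because $\|w\|_2=1$ and $v\geq 1/\sqrt 2$.

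The third step is a Gaussian tail estimate. The classifier is correct exactly when $w_1 x_1' y + v z' y > 0$, i.e.\ when $vz'y > -w_1 x_1' y$. Since $w_1/v\leq 1$ (both $w_1\le 1/\sqrt2$ and $v\ge 1/\sqrt2$) and $\eps=2\eta$ is negligibly small, we get
\[
\Pr\!\left[v z' y > -w_1 x_1' y\right]
  \;\le\; \Pr\!\left[\mathcal{N}(-4v,v^2) > -w_1(1-\eps)\right]
  \;=\; \Pr\!\left[\mathcal{N}(0,1) > 4 - \tfrac{w_1(1-\eps)}{v}\right]
  \;\le\; \Pr[\mathcal{N}(0,1) > 3],
\]
which is well under $1\%$. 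Splitting on the two cases of $x_1 y$ (and paying $(1-p)\cdot 1$ in the rare case $x_1 y=-1$, where the bound is even better) and summing gives the claimed $1\%$ bound.

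The step I expect to be the main minor obstacle is bookkeeping: I need to argue cleanly that the worst case for the adversary combines a deterministic pessimism on $w_1 x_1'$ (using $x_1 y\le 1$ and $\eps>0$) with the Gaussian shift on $vz'$, and that the ratio $w_1/v$ is uniformly bounded by $1$ on the feasible set $w_1^2+v^2=1,\ v\ge 1/\sqrt 2$. Once that reduction is in place, the conclusion follows from the same $\Pr[\mathcal{N}(0,1)>3]$-type estimate used to prove Lemma~\ref{lem:svm_nat}, only with the sign of the mean flipped; the second part of Theorem~\ref{thm:svm} (that adversarially training instead yields a classifier using only $x_1$, with standard and adversarial accuracy $p$ for all $\eps<1$) is then a one-line consequence, since $|x_1\pm\eps|$ still has the correct sign when $\eps<1$.
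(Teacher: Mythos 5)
Your proof is correct and takes essentially the same route as the paper's: exhibit the adversary that shifts each weakly-correlated feature by $2\eta$ toward $-y$, so that $vz'y\sim\N(-4v,v^2)$, and combine $v\ge 1/\sqrt 2\ge w_1$ from Lemma~\ref{lem:lb} with a Gaussian tail bound to conclude the prediction is wrong with probability well over $99\%$ independently of $x_1$. The only differences are cosmetic: the paper's adversary does not bother perturbing $x_1$, and your closing case split on $x_1y$ (with the ``pay $(1-p)$'' remark) is unnecessary, since your deterministic bound $w_1x_1'y\le w_1(1-\eps)$ already handles both cases at once.
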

\begin{proof}
Observe that the adversary can shift each feature $x_i$ towards $y$ by $2\eta$.
This will cause $z$ to be distributed as
\[z_{\text{adv}}\sim \N(-y\eta\sqrt d, 1).\]
Therefore with probability at least $99\%$, $vyz < -y \leq -w_1$ and the predicted class will be $-y$ (wrong class) independent of $x_1$.
\end{proof}
It remains to show that adversarial training for this classification task with $\eps > 2\eta$ will results in a classifier that has relies solely on the first feature.
\begin{lemma}
    \label{lem:svm_robust}
    Minimizing the adversarial variant of the loss~\eqref{eq:svm} results in a classifier that assigns $0$ weight to features $x_i$ for $i\geq 2$.
\end{lemma}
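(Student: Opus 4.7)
The plan is to mimic the structure of Lemmas~\ref{lem:symmetry} and~\ref{lem:lb} after replacing the standard hinge by its adversarial version in closed form. First, I would verify that the symmetry argument of Lemma~\ref{lem:symmetry} still applies: both the expected adversarial hinge loss and the $\ell_2^2$ regularizer are invariant under permutations of coordinates $2,\ldots,d+1$, so swapping then averaging strictly reduces the regularization while not increasing the data term. Hence any optimum can be parametrized by $(w_1,v)$ with $w_i=v/\sqrt d$ for $i\geq 2$, and by a global sign flip we may assume $v\geq 0$. For $\ell_\infty$ perturbations the inner maximum admits the well-known closed form
\[\max_{\lVert\delta\rVert_\infty\leq\eps}\max\bigl(0,\,1-yw^\top(x+\delta)\bigr) = \max\bigl(0,\,1-yw^\top x+\eps\lVert w\rVert_1\bigr),\]
so, letting $z:=\tfrac{1}{\sqrt d}\sum_{i=2}^{d+1}x_i\sim\N(y\eta\sqrt d,1)$, the adversarial training objective reduces to
\[L(w_1,v) \;=\; \E\bigl[\max(0,\,A+vB)\bigr] + \tfrac{\lambda}{2}(w_1^2+v^2),\qquad A:=1+\eps|w_1|-yw_1x_1,\ B:=\eps\sqrt d-yz.\]

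Next, I would show that for any fixed $w_1$ the map $v\mapsto L(w_1,v)$ is strictly minimized at $v=0$. Since the hinge $\phi(t)=\max(0,t)$ is convex with $\mathbb{1}[t>0]$ as a subgradient, the subgradient inequality at the point $A$ gives the pointwise bound $\phi(A+vB)-\phi(A)\geq vB\cdot\mathbb{1}[A>0]$, and taking expectations yields
\[L(w_1,v)-L(w_1,0)\;\geq\; v\cdot\E\bigl[B\,\mathbb{1}[A>0]\bigr]+\tfrac{\lambda}{2}v^2.\]
Conditional on $y$, the variables $x_1$ and $z$ are independent, so $A$ and $B$ are conditionally independent; moreover $\E[B\mid y]=(\eps-\eta)\sqrt d\geq\eta\sqrt d\geq 4$ whenever $\eps\geq 2\eta$, and $\Pr[A>0\mid y]$ is bounded below (on the event $\{yx_1=-1\}$, which has probability $1-p$, one has $A\geq 1+w_1>0$). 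Consequently $\E[B\,\mathbb{1}[A>0]]>0$, and therefore $L(w_1,v)>L(w_1,0)$ for every $v>0$, which forces any minimizer to satisfy $v=0$, i.e.\ $w_i=0$ for all $i\geq 2$.

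The main obstacle is ensuring that this expected subgradient is strictly positive. This relies on two ingredients: the conditional independence of $x_1$ and $z$ given $y$ (so that $A$ and $B$ factor and $\E[B\,\mathbb{1}[A>0]\mid y]=\E[B\mid y]\Pr[A>0\mid y]$), and the adversary's budget exceeding the per-meta-feature signal strength ($\eps\geq 2\eta$), which is precisely the regime in which the adversary can on average flip the sign of $yz$. Everything else — the symmetry reduction, the $\ell_\infty$ dual form, and the convexity/subgradient manipulation — is a routine adaptation of the analysis already carried out for the standard loss.
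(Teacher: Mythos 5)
Your argument is essentially the paper's: the paper also passes to the dual form $\max(0,1-yw^\top x+\eps\|w\|_1)$ and then observes that, because $\eps\geq 2\eta>\eta$, the net contribution of any weight on a weakly-correlated feature to the (expected) hinge term is non-favorable, so zeroing it cannot increase the margin loss while it strictly decreases the regularizer. The only real differences are bookkeeping: the paper argues coordinate-by-coordinate (no appeal to the symmetry lemma) and leaves the convexity step implicit ("setting $w_i$ to zero can only decrease the margin term"), whereas you aggregate into $z$ and make that step rigorous via the subgradient inequality plus conditional independence of $A$ and $B$ given $y$ --- a nice tightening, since $\E[B\,\I[A>0]]=\E_y\big[\E[B\mid y]\Pr[A>0\mid y]\big]>0$ is exactly what the paper's one-liner needs. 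One slip to fix: the objective is \emph{not} invariant under a global sign flip of $w$ (the distribution of $yw^\top x$ is not symmetric), so you cannot assume $v\geq 0$ that way; moreover your identity $1-yw^\top x+\eps\|w\|_1=A+vB$ uses $\eps\|w\|_1=\eps|w_1|+\eps v\sqrt d$ and hence already presumes $v\geq0$. The $v<0$ case is handled by the identical subgradient argument after writing the hinge argument as $A+|v|(\eps\sqrt d+yz)$, whose conditional mean $(\eps+\eta)\sqrt d$ is even larger, so the gap is cosmetic (and, for what it is worth, the paper's own proof likewise only treats positive weights). Your lower bound $A\geq 1+w_1>0$ on the event $yx_1=-1$ also tacitly uses $|w_1|<1$ (the unit-norm normalization); alternatively note that on one of the two events $yx_1=\pm1$ one always has $A\geq 1$, and in any case strictness already follows from the $\tfrac{\lambda}{2}v^2$ term once $\E[B\,\I[A>0]]\geq 0$.
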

\begin{proof}
The optimization problem that adversarial training solves is
\[\min_w \max_{\|\delta\|_\infty\leq \eps}
    \E \brackets*{\max(0, 1 - y w^\top (x+\delta))}
        + \frac{1}{2} \lambda \|w\|^2_2,\]
which is equivalent to 
\[\min_w \E \brackets*{\max(0, 1 - y w^\top x + \eps \|w\|_1)}
        + \frac{1}{2} \lambda \|w\|^2_2.\]
Consider any optimal solution $w$ for which $w_i >0$ for some $i>2$.
The contribution of terms depending on $w_i$ to $1-yw^\top x + \eps \|w\|_1$ is a normally-distributed random variable with mean $2\eta - \eps \leq 0$.
Since the mean is non-positive, setting $w_i$ to zero can only decrease the margin term of the loss.
At the same time, setting $w_i$ to zero \emph{strictly} decreases the regularization term, contradicting the optimality of $w$.
\end{proof}
Clearly, such a classifier will have standard and adversarial accuracy of $p$ against any $\eps <1$ since such a value of $\eps$ is not sufficient to change the sign of the first feature.
This concludes the proof of the theorem.

\section{Robustness-accuracy trade-off: An empirical examination}
\label{sec:empirical_linear_app}
Our theoretical analysis shows that there is an inherent tension between standard accuracy and adversarial robustness.
At the core of this trade-off is the concept of robust and non-robust features.
We now investigate whether this notion arises experimentally by studying a dataset that is amenable to linear classifiers, MNIST~\citep{lecun1998mnist} (details in Appendix~\ref{sec:setup}).

Recall the goal of standard classification for linear classifiers is to predict accurately, i.e.\ $y = \sign(w^\top x)$.
Hence the correlation of a feature $i$ with the true label, computed as $|\E[yx_i]|$, quantifies how useful this feature is for classification.
In the adversarial setting, against an $\epsilon$ $\ell_\infty$-bounded adversary we need to ensure that $y = \sign(w^\top x -\eps y\|w\|_1)$.
In that case we roughly expect a feature $i$ to be helpful if $|\E[yx_i]|\geq \eps$.

\begin{figure}[!thp]
	\centering
	\includegraphics[width=0.95\textwidth]{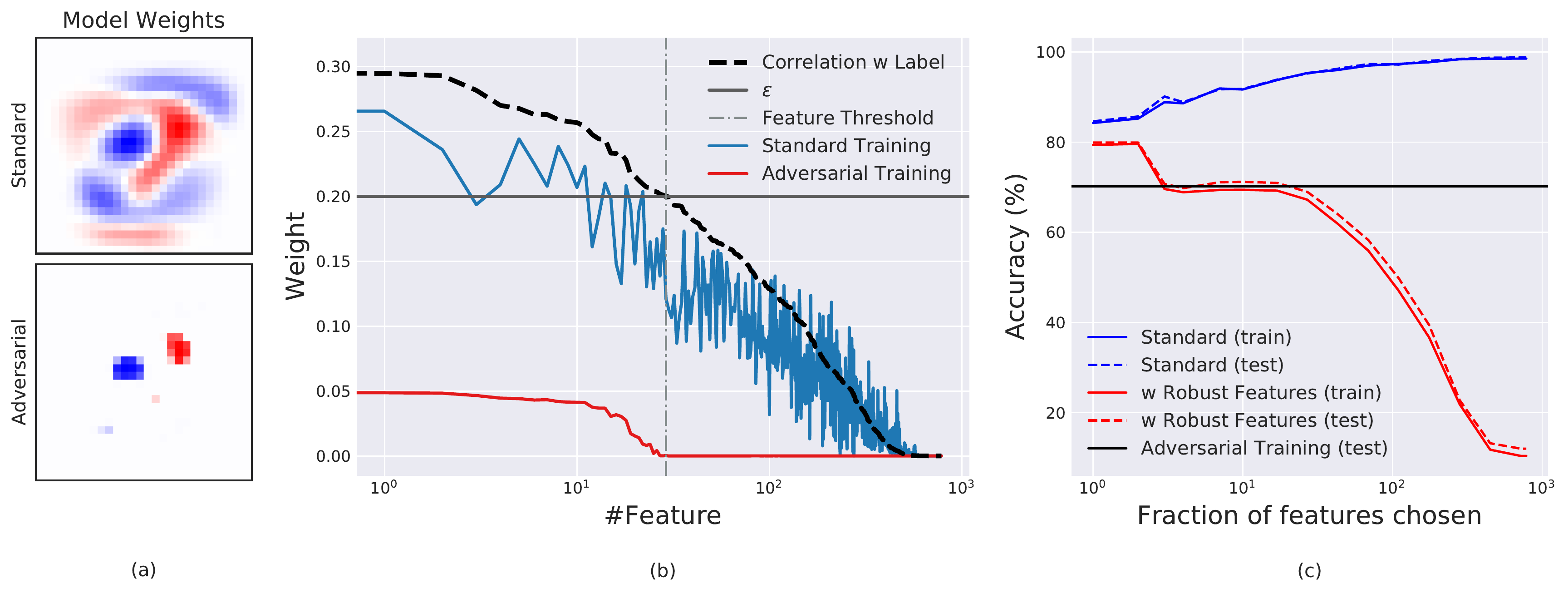} 
	\caption{Analysis of linear classifier trained on a binary MNIST task ($5$ vs. $7$). 
		(Details in Appendix Table~\ref{tab:linear}.)
		(a) Visualization of network weights per input feature. 
		(b) Comparison of feature-label correlation to the weight assigned to the feature by each network.
		Adversarially trained networks put weights only on a small number of strongly-correlated or ``robust'' features.
		(c) Performance of a model trained using \emph{standard} training only on the most robust features. Specifically, we sort features based on decreasing correlation with the label and train using only the most correlated ones. Beyond a certain threshold, we observe that as more non-robust or (weakly correlated) features are available to the model, the standard accuracy increases at the cost of robustness.
	}
	\label{fig:linear_mnist}
\end{figure}

This calculation suggests that in the adversarial setting, there is an implicit threshold on feature correlations imposed by the threat model (the perturbation allowed to the adversary).
While standard models may utilize all features with non-zero correlations, a robust model cannot rely on features with correlation below this threshold.
In Figure~\ref{fig:linear_mnist}(b), we visualize the correlation of each pixel (feature) in the MNIST dataset along with the learned weights of the standard and robust classifiers.
As expected, we see that the standard classifier assigns weights even to weakly-correlated pixels so as to maximize prediction confidence.
On the other hand, the robust classifier does not assign any weight below a certain correlation threshold which is dictated by the adversary's strength ($\epsilon$) (Figures~\ref{fig:linear_mnist}(a, b))

Interestingly, the standard model assigns non-zero weight even to very weakly correlated pixels (Figure~\ref{fig:linear_mnist}(a)).
In settings with finite training data, such non-robust features could arise from noise.
(For instance, in $N$ tosses of an unbiased coin, the expected imbalance between heads and tails is $O(\sqrt{N})$ with high probability.)
A standard classifier would try to take advantage of even this ``hallucinated'' information by assigning non-zero weights to these features.

\subsection{An alternative path to robustness?}
The analysis above highlights an interesting trade-off between the predictive power of a feature and its vulnerability to adversarial perturbations.
This brings forth the question  -- Could we use these insights to train robust classifiers with \emph{standard} methods (i.e.\ without performing adversarial training)?
As a first step, we train a (standard) linear classifier on MNIST utilizing input features (pixels) that lie above a given correlation threshold (see Figure~\ref{fig:linear_mnist}(c)).
As expected, as more non robust features are incorporated in training, the standard accuracy increases at the cost of robustness.
Further, we observe that a standard classifier trained in this manner using few robust features attains better robustness than even adversarial training.
This results suggest a more direct (and potentially better) method of training robust networks in certain settings. 

\section{Additional related work}
\label{app:related}

\cite{fawzi2016robustness} derive parameter-dependent bounds on the robustness
of any fixed classifier, while
\cite{wang2018analyzing} analyze the adversarial robustness of nearest neighbor classifiers.
Our results focus on the statistical setting itself and provide lower bounds for \emph{all} classifiers learned in this setting.

\cite{schmidt2018adversarially} study the generalization aspect of adversarially robustness.
They show that the number of samples needed to achieve adversarially robust generalization is polynomially larger in the dimension than the number of samples needed to ensure standard generalization.
However, in the limit of infinite data, one can learn classifiers that are both robust and accurate and thus the trade-off we study does not manifest.

\cite{gilmer2018adversarial} demonstrate a setting where even a small amount of standard error implies that most points provably have a misclassified point close to them.
In this setting, achieving perfect standard accuracy (easily achieved by a simple classifier) is sufficient to achieve perfect adversarial robustness.
In contrast, our work focuses on a setting where adversarial training (provably) matters and there exists a trade-off between standard and adversarial accuracy.

\cite{xu2012robustness} explore the connection between robustness and generalization, showing that, in a certain sense, robustness can imply generalization.
This direction is orthogonal to our, since we work in the limit of infinite data, optimizing the distributional loss directly.

\cite{fawzi2018adversarial} prove lower bounds on the robustness of any classifier based on certain generative assumptions.
Since these bounds apply to all classifiers, independent of architecture and
training procedure, they fail to capture the situation we face in practice where
robust optimization can significantly improve the adversarial robustness of
standard classifiers~\citep{madry2018towards,wong2018provable,raghunathan2018certified,sinha2018certifiable}.

A recent work~\citep{bubeck2018adversarial} turns out to (implicitly) rely on the distinction between robust and non-robust features in constructing a distribution for which adversarial robustness is hard from a different, computational point of view. 

\cite{goodfellow2015explaining} observed that adversarial training results in feature weights that depend on fewer input features (similar to Figure~\ref{fig:linear_mnist}(a)).
Additionally, it has been observed that for \emph{naturally trained} RBF classifiers on MNIST, targeted adversarial attacks resemble images of the target class~\citep{goodfellow2018personal}.

\cite{su2018robustness} empirically observe a similar trade-off between the accuracy and robustness of  \emph{standard} models across different deep architectures on ImageNet.

\clearpage
\section{Omitted figures}
\label{app:omitted_figures}

\begin{table}[htp]
	\caption{Comparison of performance of linear classifiers trained on a binary MNIST dataset with standard and adversarial training.
		The performance of both models is evaluated in terms of standard and adversarial accuracy.
		Adversarial accuracy refers to the percentage of examples that are correctly classified after being perturbed by the adversary.
		Here, we use an $\ell_\infty$ threat model with $\epsilon = 0.20$ (with images scaled to have coordinates in the range $[0,1]$). }
	\label{tab:linear}
	\centering
  \setlength{\tabcolsep}{7.5pt}
	\begin{tabular}{ccccccccc}
		\toprule
		& \multicolumn{2}{c}{\textbf{Standard Accuracy (\%)}} &\multicolumn{2}{c}{\textbf{Adversarial Accuracy (\%)}} & \multirow{2}{*}{\textbf{$\|w\|_1$}} \\
		 & Train & Test & Train & Test && \\
		\midrule
		Standard Training & 98.38 & 92.10 & 13.69  & 14.95 & 41.08 \\
		Adversarial Training & 94.05  & 70.05 & 76.05  & 74.65 & 13.69 \\
		\bottomrule
	\end{tabular}
\end{table}

\begin{figure}[htp]
	
	\begin{subfigure}[b]{0.3\textwidth}
		
		\includegraphics[height=1.65in]{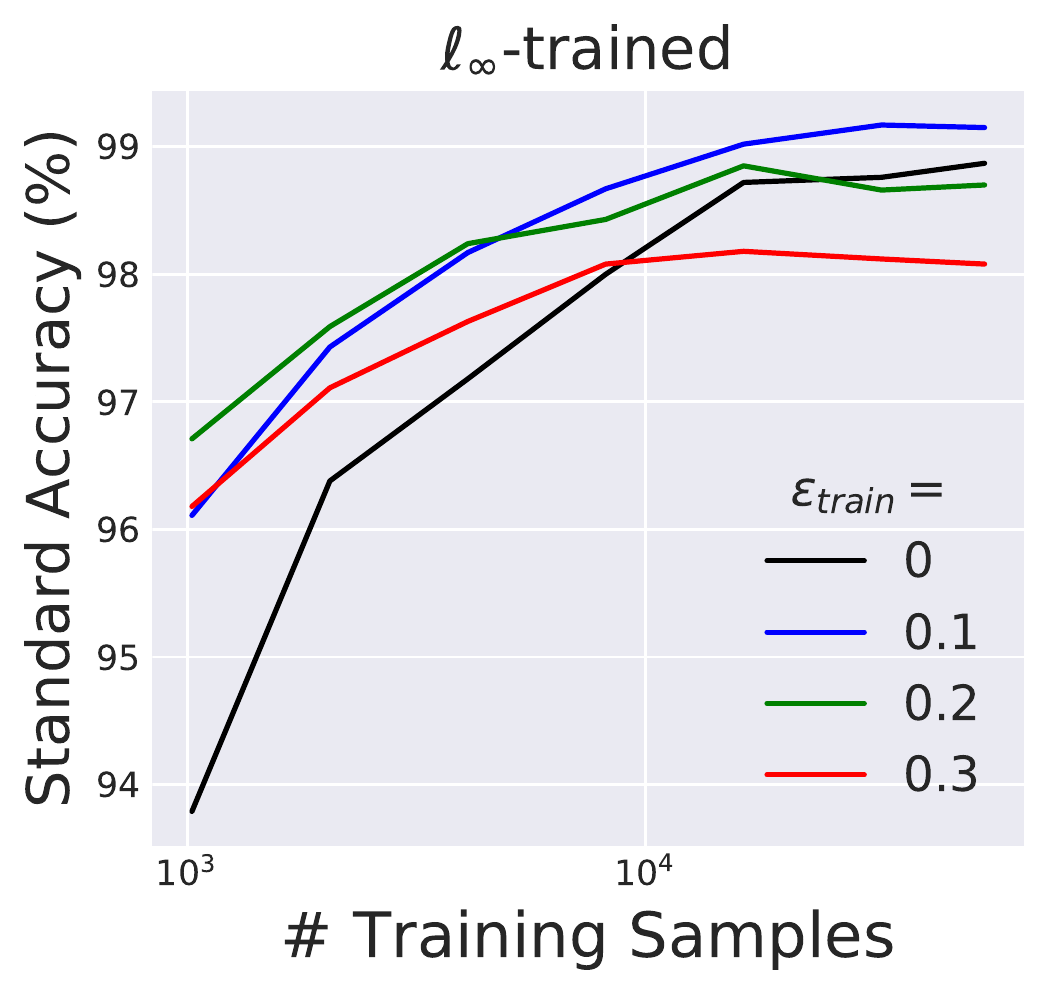} 
		\caption{MNIST}
	\end{subfigure}
	\hfil
	\begin{subfigure}[b]{0.3\textwidth}
		\includegraphics[height=1.65in]{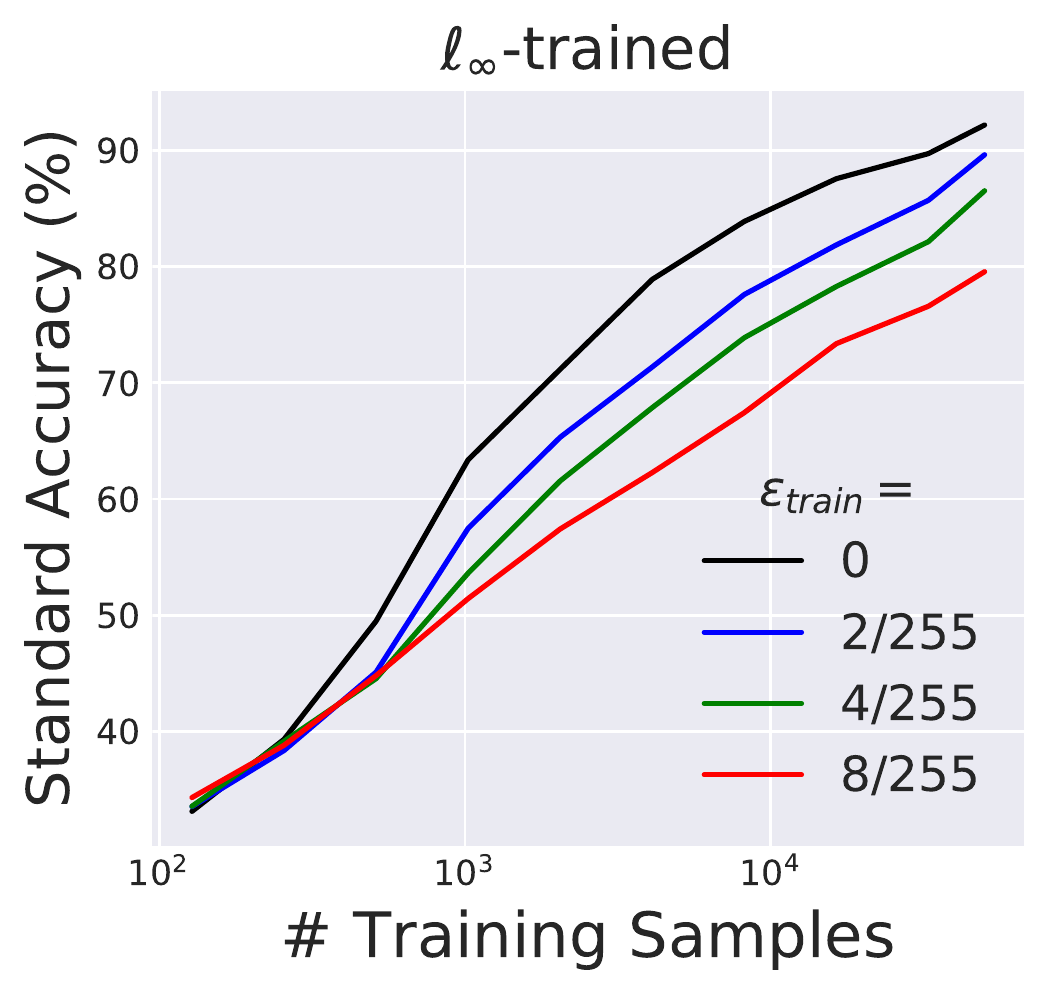} 
		\caption{CIFAR-10}
	\end{subfigure}
	\hfil
	\begin{subfigure}[b]{0.3\textwidth}
		\includegraphics[height=1.65in]{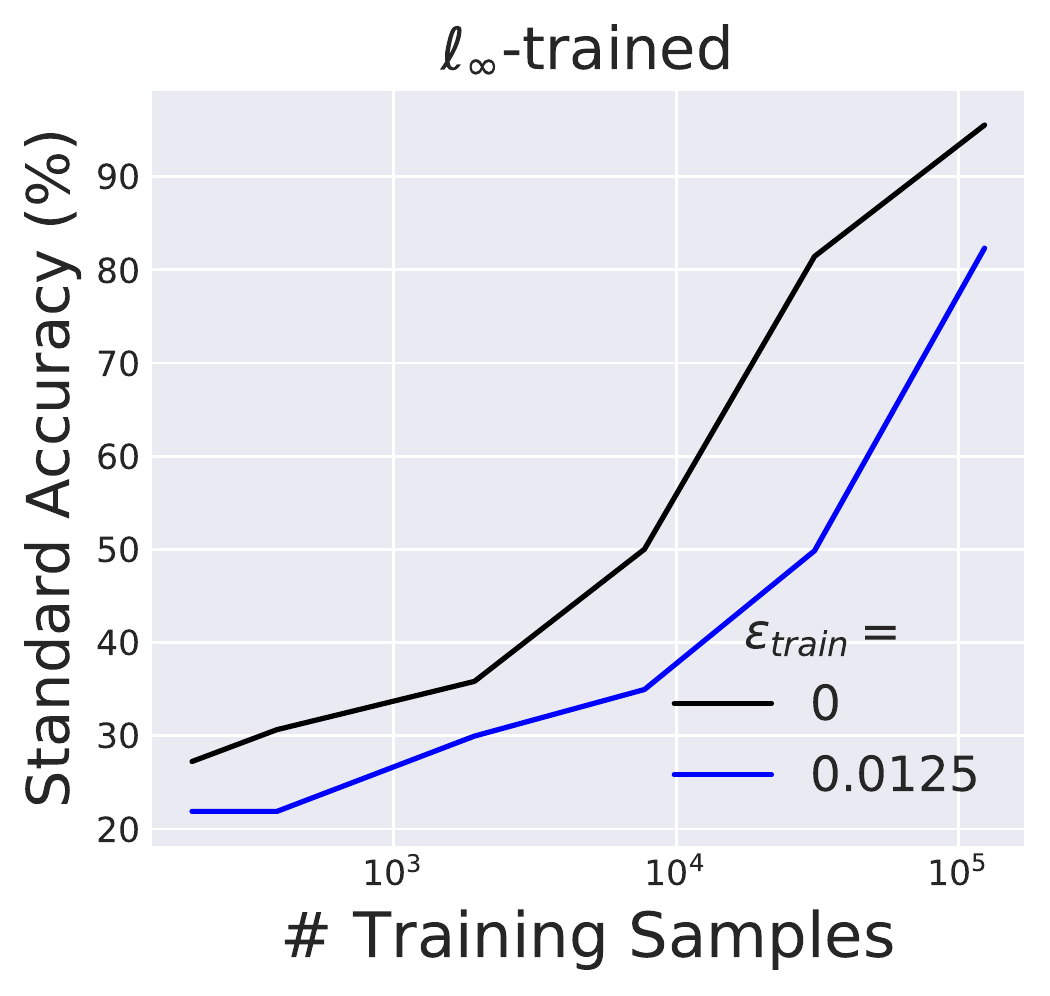} 
		\caption{Restricted ImageNet}
	\end{subfigure}
	\caption{Comparison of standard accuracies of models trained against an $\ell_\infty$-bounded adversary as a function of the size of the training dataset.
		We observe that in the low-data regime, adversarial training has an effect similar to data augmentation and helps with generalization in certain cases (particularly on MNIST).
		However, in the limit of sufficient training data, we see that the standard accuracy of robust models is less than that of the standard model ($\epsilon_{train} = 0$), which supports the theoretical analysis in Section~\ref{sec:theory}.}
	\label{fig:md_linf}
\end{figure}

\begin{figure}[!thp]
	\centering
	\includegraphics[width=0.95\textwidth]{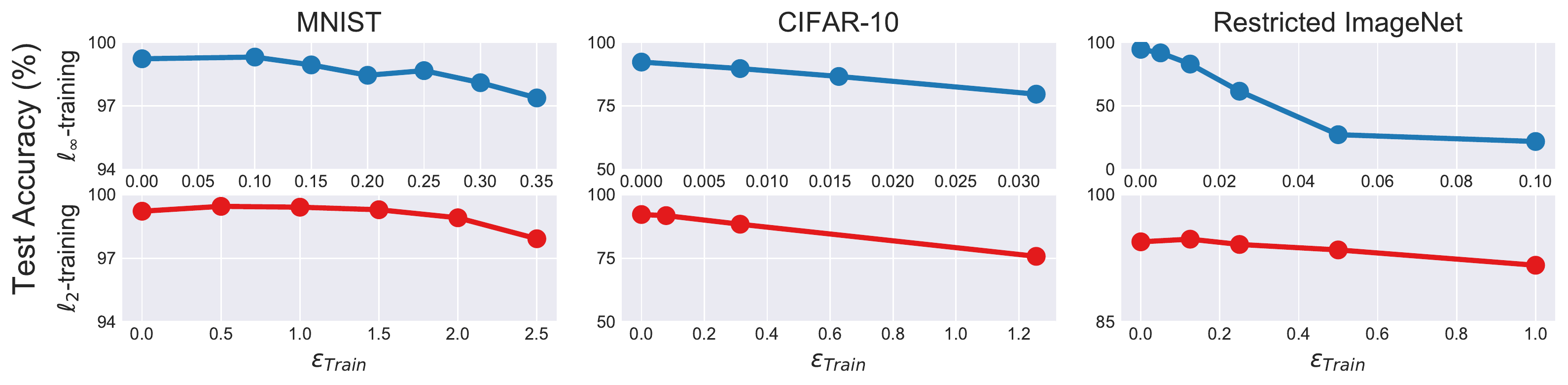} 
	\caption{Standard test accuracy of adversarially trained classifiers.
    The adversary used during training is constrained within some $\ell_p$-ball of radius $\eps_{\text{train}}$ (details in Appendix~\ref{sec:setup}). We observe a consistent decrease in accuracy as the strength of the adversary increases.}
	\label{fig:na_low}
\end{figure}

\begin{figure}[!htb]
	
	\begin{subfigure}[b]{0.95\textwidth}
		\centering
		\includegraphics[height=2.5in]{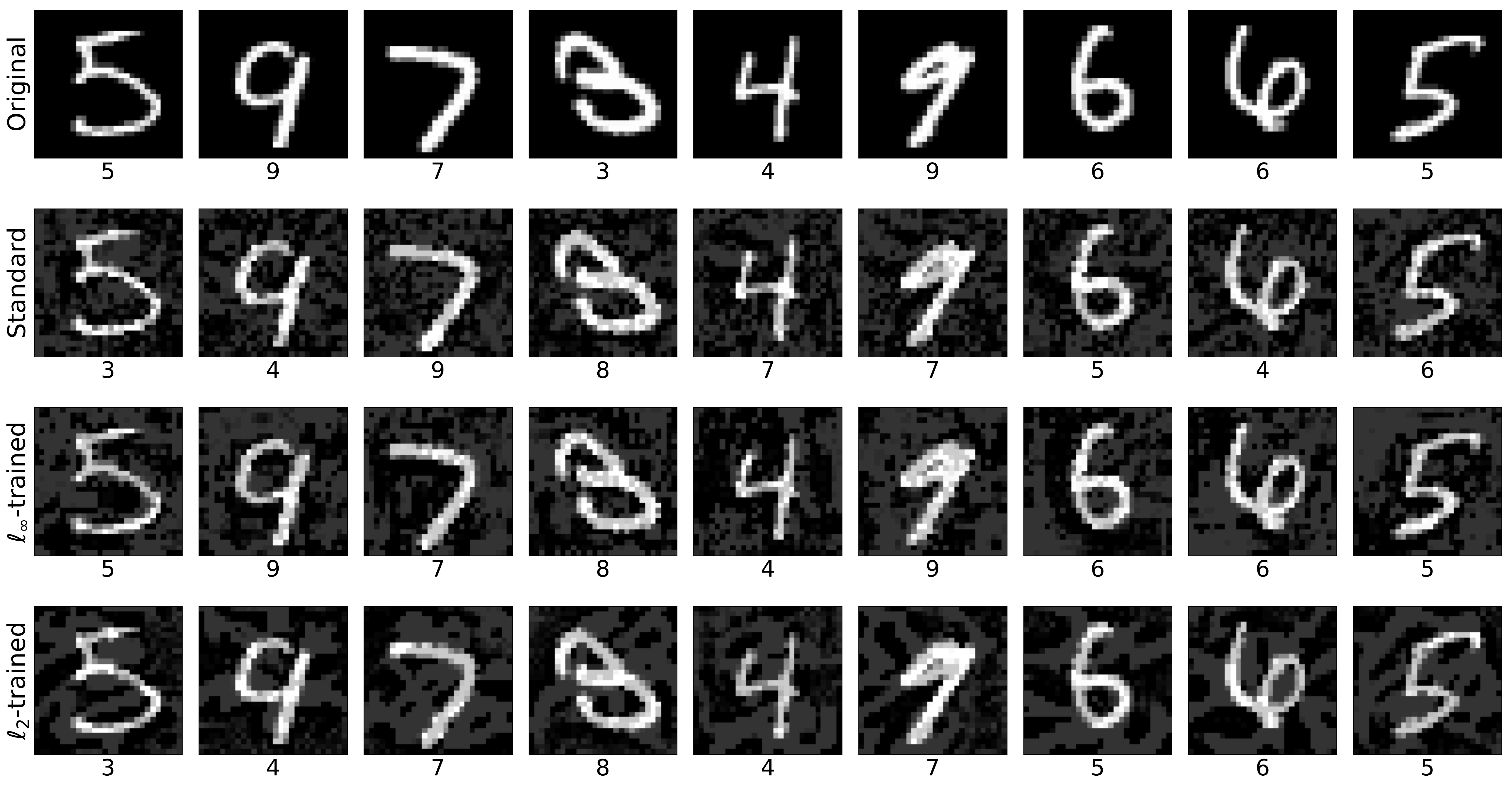} 
		\caption{MNIST}
	\end{subfigure}
	\hfill
	\begin{subfigure}[b]{0.95\textwidth}
		\centering
		\includegraphics[height=2.5in]{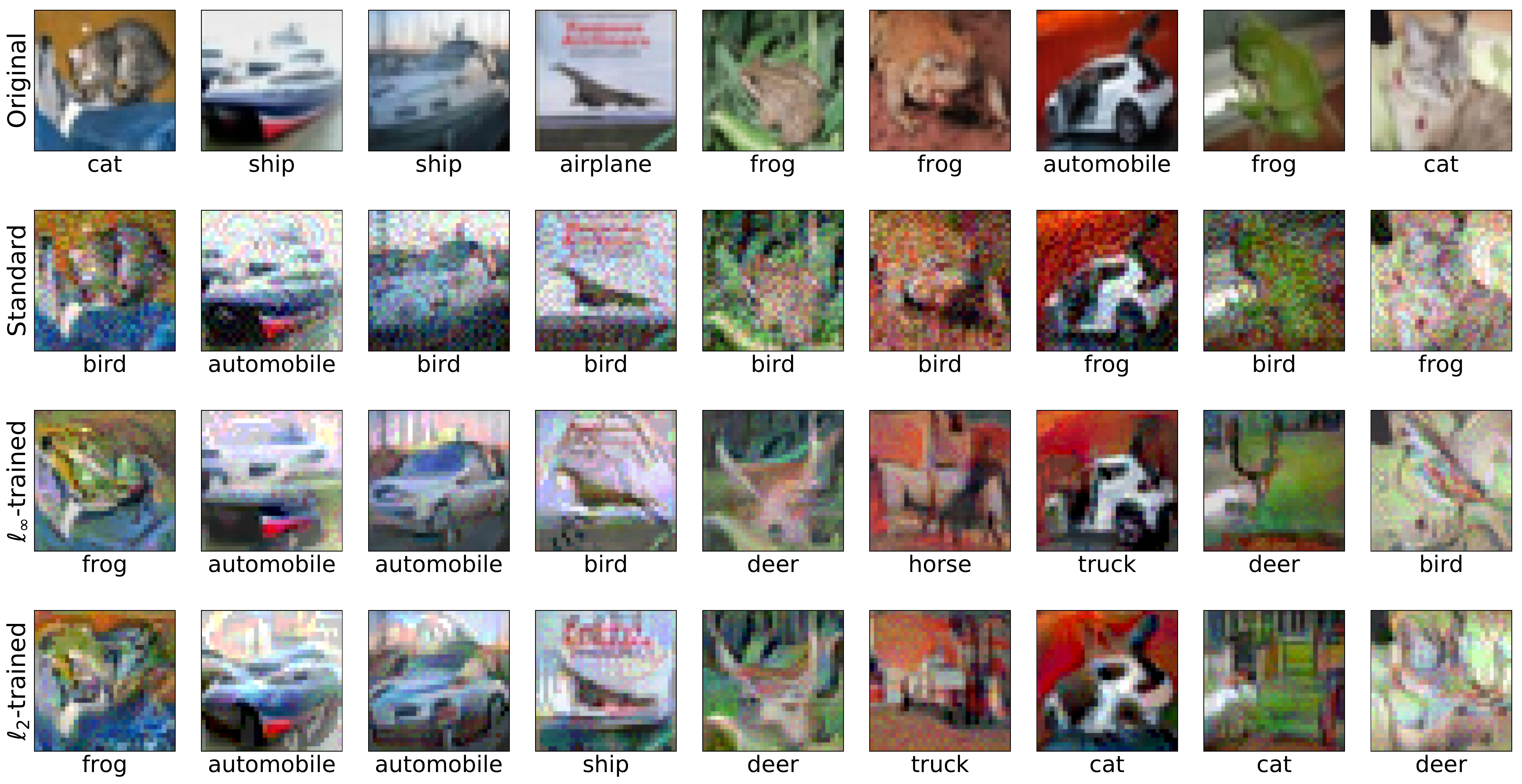} 
		\caption{CIFAR-10}
	\end{subfigure}
	\hfill
	\begin{subfigure}[b]{0.95\textwidth}
		\centering
		\includegraphics[height=2.5in]{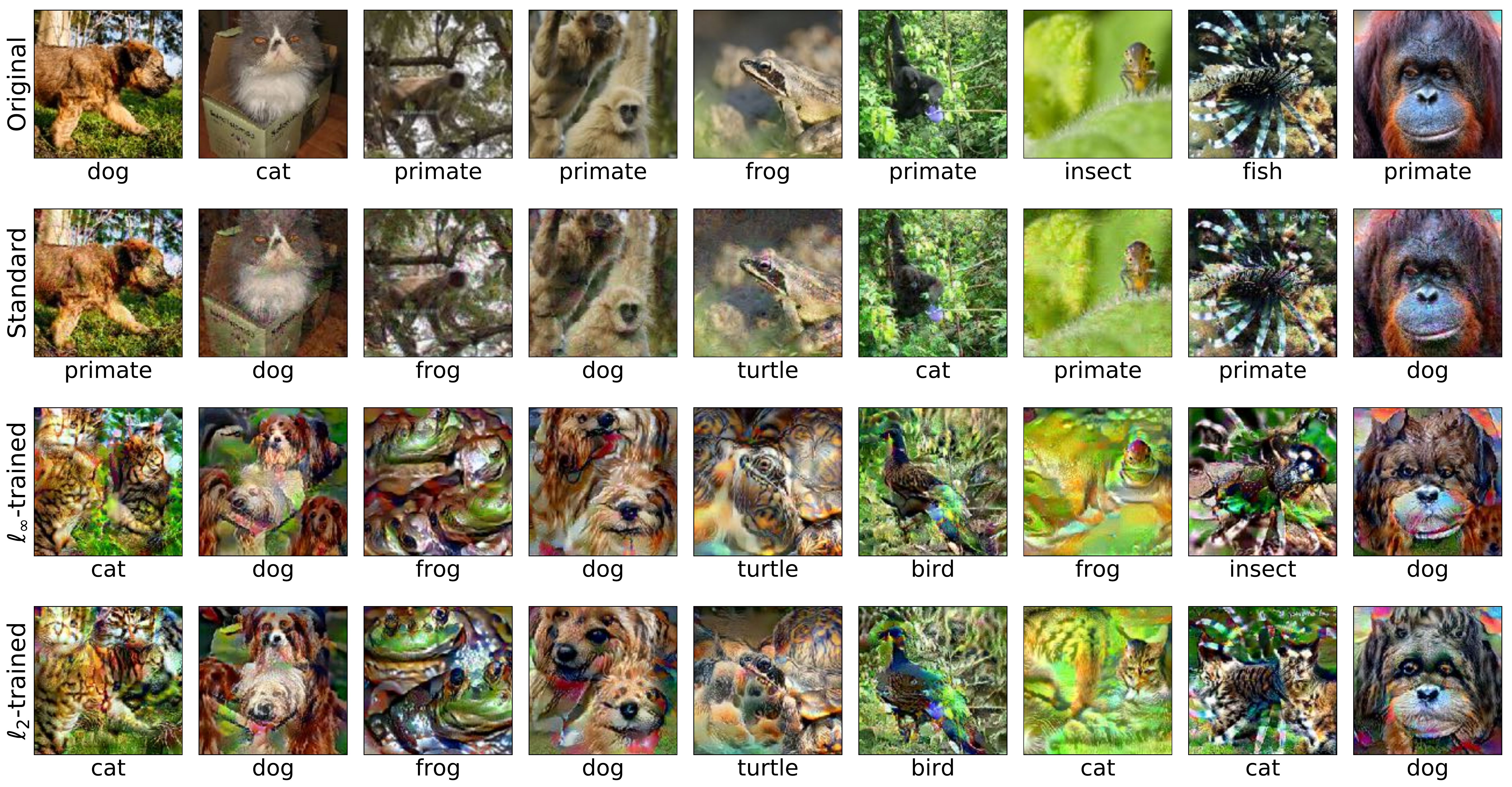} 
		\caption{Restricted ImageNet}
	\end{subfigure}
	
	\caption{Large-$\eps$ adversarial examples, bounded in $\ell_\infty$-norm, similar to those in Figure~\ref{fig:pgd_awesome}.}
	\label{fig:pgd_ext_inf}
\end{figure}

\begin{figure}[!htb]
	
	\begin{subfigure}[b]{0.95\textwidth}
		\centering
		\includegraphics[height=2.5in]{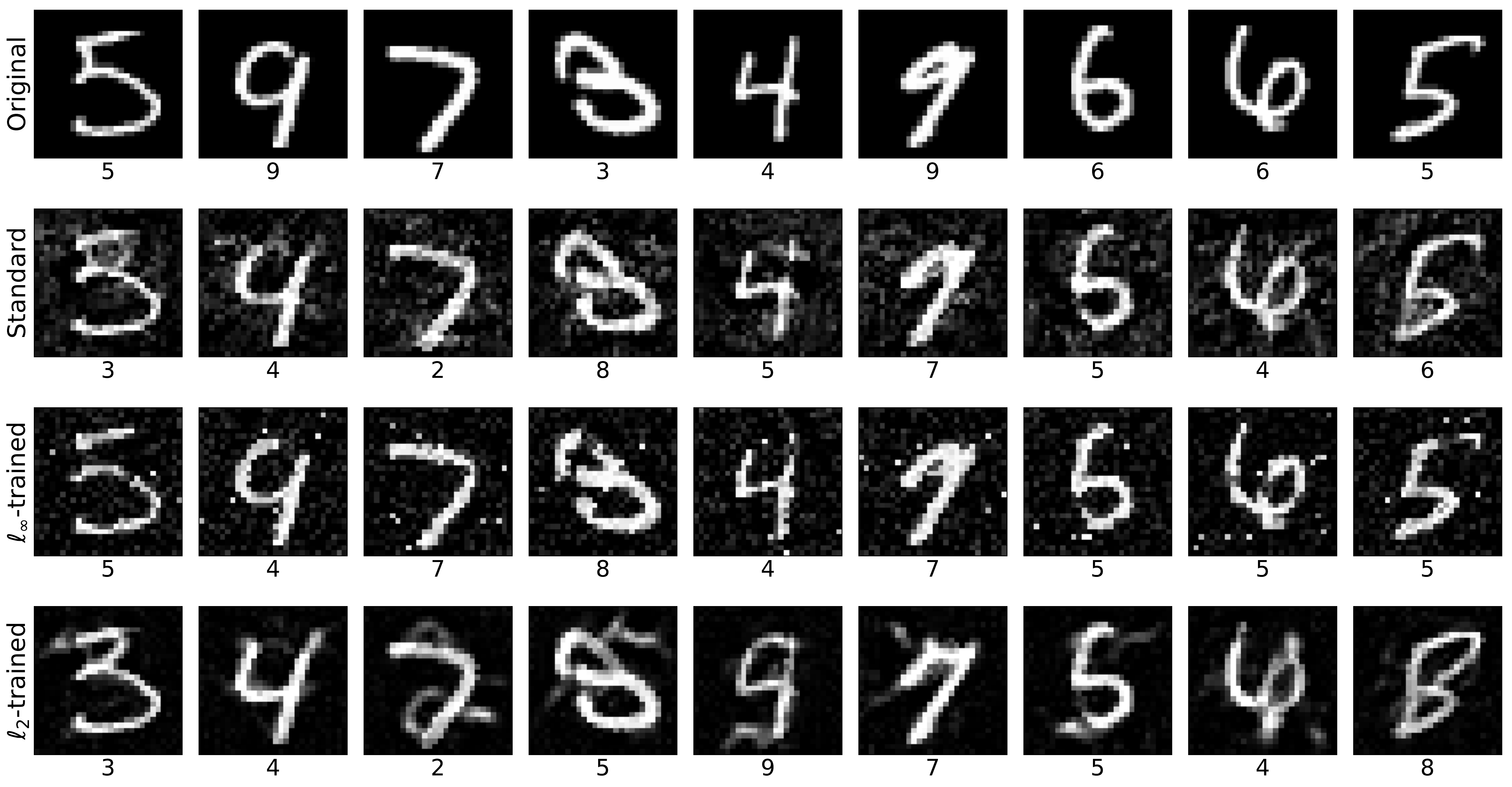} 
		\caption{MNIST}
	\end{subfigure}
	\hfill
	\begin{subfigure}[b]{0.95\textwidth}
		\centering
		\includegraphics[height=2.5in]{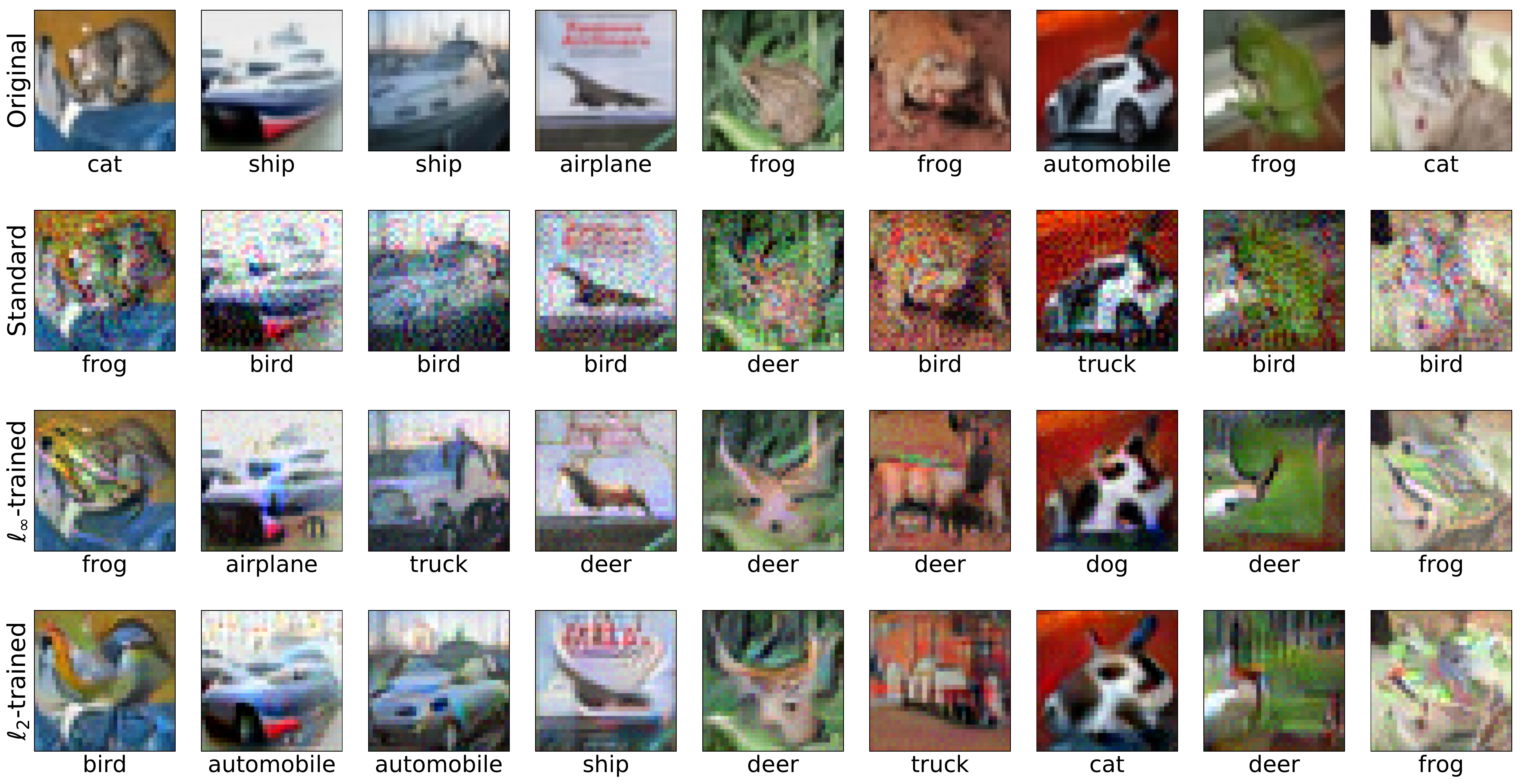} 
		\caption{CIFAR-10}
	\end{subfigure}
	\hfill
	\begin{subfigure}[b]{0.95\textwidth}
		\centering
		\includegraphics[height=2.5in]{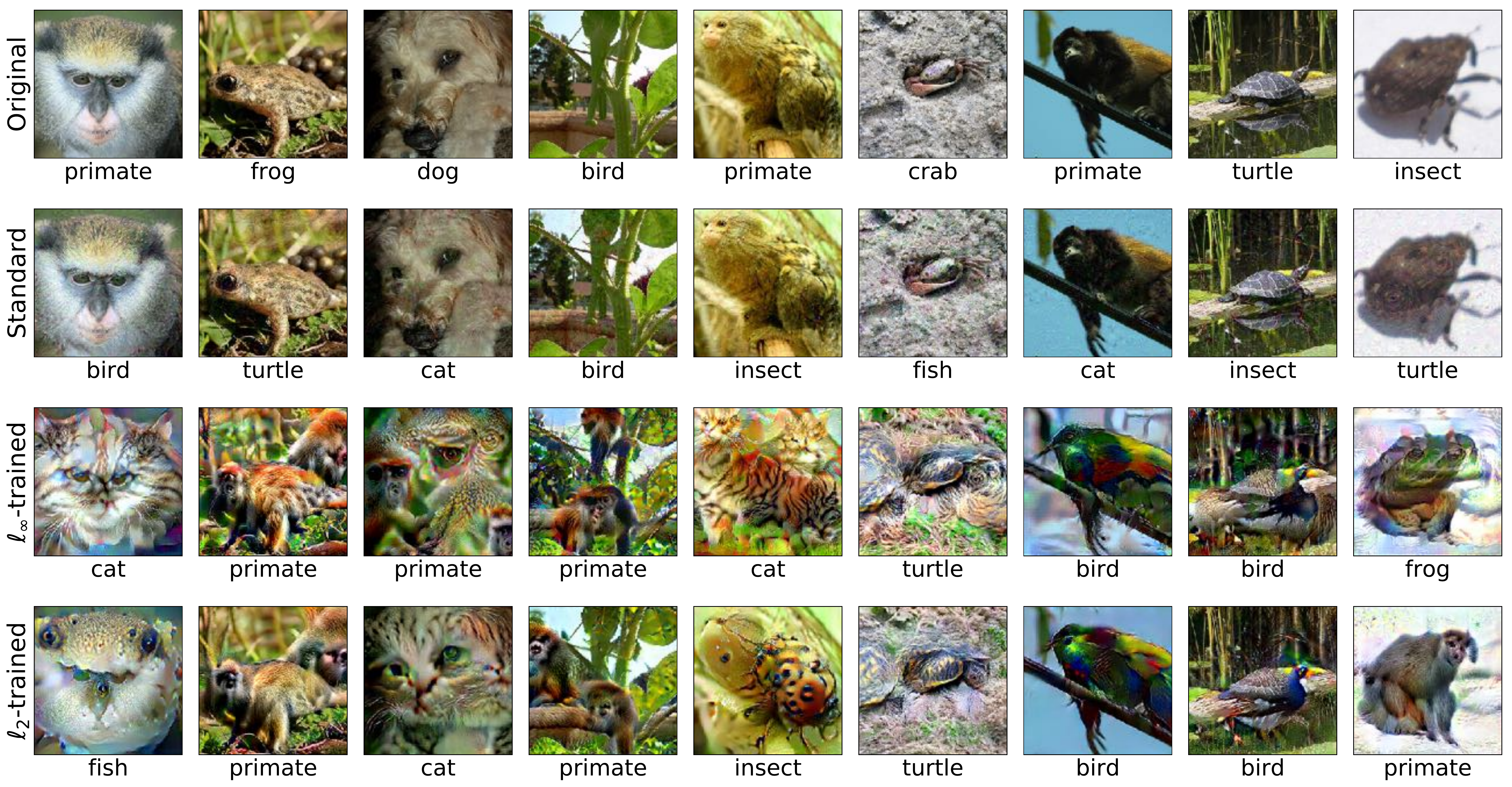} 
		\caption{Restricted ImageNet}
	\end{subfigure}
	
	\caption{Large-$\eps$ adversarial examples, bounded in $\ell_2$-norm, similar to those in Figure~\ref{fig:pgd_awesome}.}
	\label{fig:pgd_ext_two}
\end{figure}

\begin{figure}[!htb]
	
	\begin{subfigure}[b]{0.95\textwidth}
		\centering
		\includegraphics[height=2.2in]{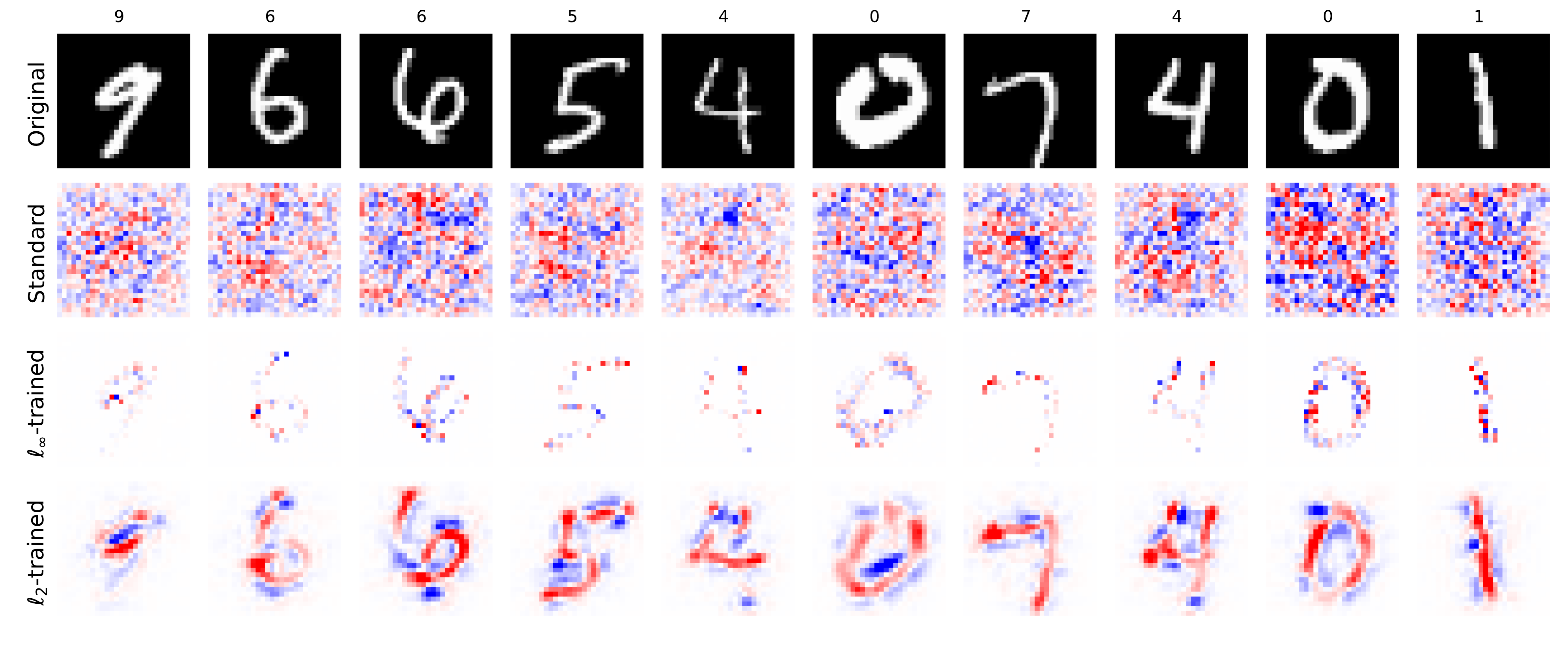} 
		\caption{MNIST}
	\end{subfigure}
	\hfill
	\begin{subfigure}[b]{0.95\textwidth}
		\centering
		\includegraphics[height=2.2in]{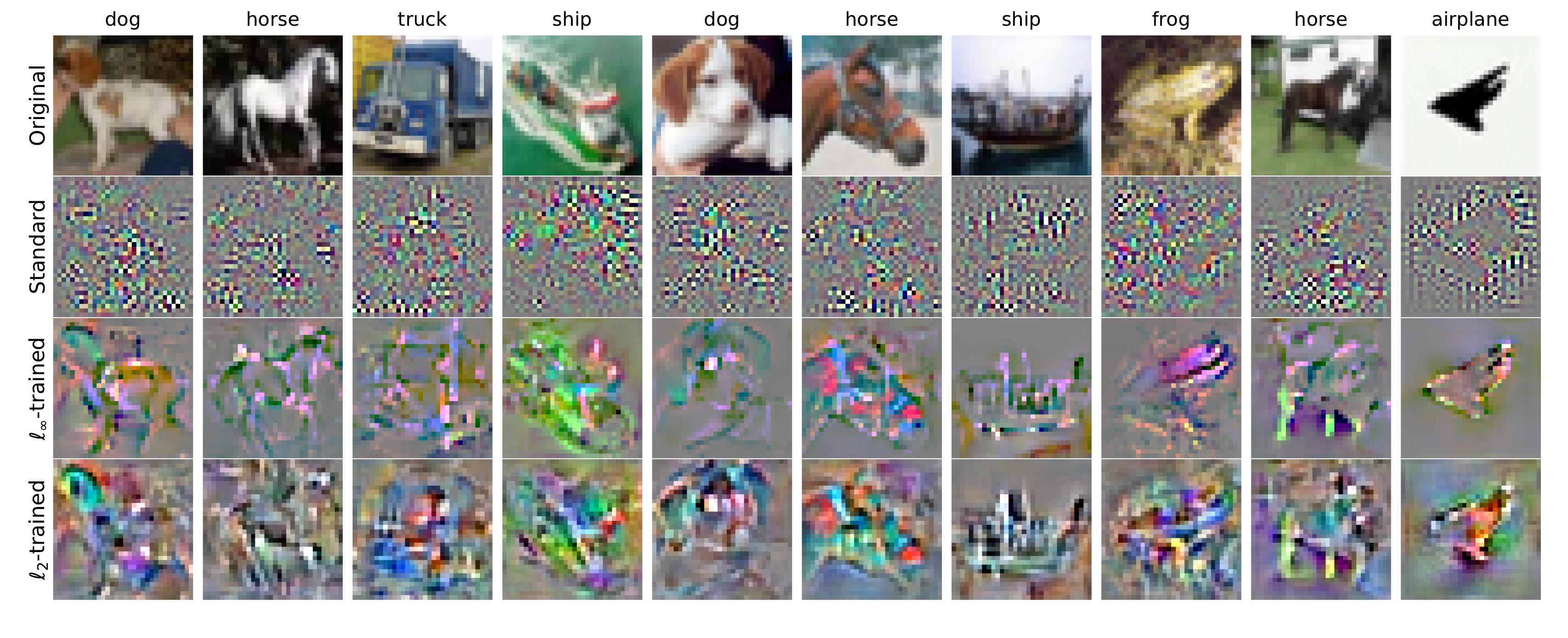} 
		\caption{CIFAR-10}
	\end{subfigure}
	\hfill
	\begin{subfigure}[b]{0.95\textwidth}
		\centering
		\includegraphics[height=2.2in]{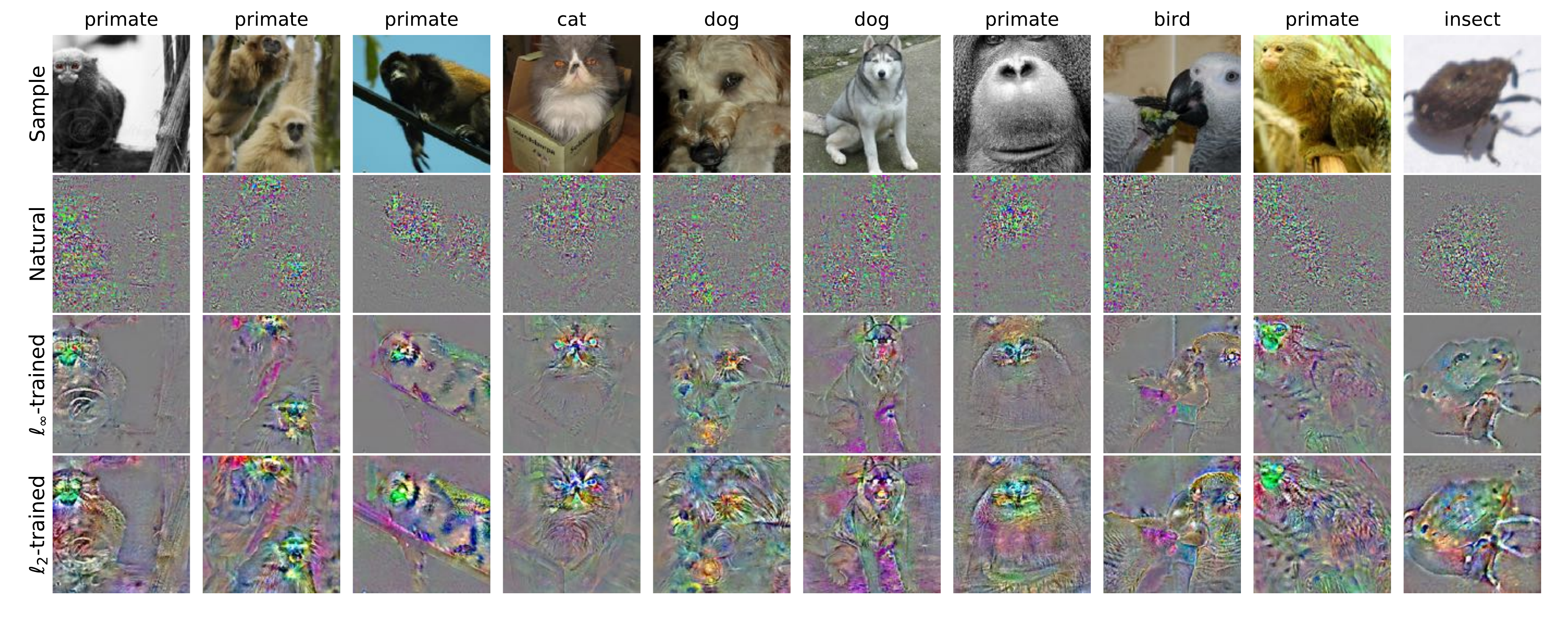} 
		\caption{Restricted ImageNet}
	\end{subfigure}
	
	\caption{Visualization of the gradient of the loss with respect to input features (pixels) for standard and adversarially trained networks for $10$ randomly chosen samples, similar to those in Figure~\ref{fig:gradients}.
    Gradients are significantly more interpretable for adversarially trained networks -- they align well with perceptually relevant features. For MNIST, blue and red pixels denote positive and negative gradient regions respectively. For CIFAR10 and Restricted ImageNet we clip pixel to 3 standard deviations and scale to $[0,1]$.}
	\label{fig:gradients_ext}
\end{figure}

\end{document}